\numberwithin{equation}{section}
\newtheorem{proposition}{P{\scriptsize ROPOSITION}}[section]
\newtheorem{theorem}[proposition]{T{\scriptsize HEOREM}}
\newtheorem{lemma}[proposition]{L{\scriptsize  EMMA}}
\newtheorem{remark}{R{\scriptsize  EMARK}}[section]
\newtheorem{definition}{D{\scriptsize  EFINITION}}[section]
\newtheorem{example}{E{\scriptsize  xample}}[section]
\def\R{\mathbb{R}}
\def\H{\mathbb{Q}}
\def\Q{\mathbb{Q}}
\newcommand{\iq}{{\bf i}}
\newcommand{\Iq}{{\bf I}}
\newcommand{\jq}{{\bf j}}
\newcommand{\kq}{{\bf k}}
\newcommand{\Aq}{{\bf A}}
\newcommand{\aq}{{\bf a}}
\newcommand{\vq}{{\bf v}}
\newcommand{\uq}{{\bf u}}
\newcommand{\Pq}{{\bf P}}
\newcommand{\Wq}{{\bf W}}
\newcommand{\wq}{{\bf w}}
\newcommand{\Fq}{{\bf F}}
\newcommand{\Dq}{{\bf D}}
\begin{document}
\title {Generalized Two-Dimensional Quaternion Principal Component Analysis with Weighting\\ for Color Image Recognition\thanks{The work of the authors is partially supported  by  National Natural Science Foundation of China under grants 12171210, 12090011 and 11771188; 
the Natural Science Foundation of the Jiangsu
Higher Education Institutions of China under grant 21KJA110001; the Priority Academic Program Development Project (PAPD);  the Top-notch Academic Programs Project (No. PPZY2015A013) of Jiangsu Higher Education Institutions.}}

\author{Zhi-Gang~Jia,~
        Zi-Jin~Qiu,~
        Qian-Yu~Wang,~
        Mei-Xiang~Zhao
        and~Dan-Dan~Zhu
\IEEEcompsocitemizethanks{\IEEEcompsocthanksitem Z.-G. Jia is with the School of Mathematics and Statistics,  Jiangsu Normal University, Xuzhou 221116,
China; and with  the Research Institute of Mathematical Science,  Jiangsu Normal University, Xuzhou 221116,
China.\protect\\
E-mail: zhgjia@jsnu.edu.cn
\IEEEcompsocthanksitem Z.-J. Qiu, Q.-Y. Wang and D.-D. Zhu are with the School of Mathematics and Statistics,  Jiangsu Normal University, Xuzhou 221116,
China.
\IEEEcompsocthanksitem 
M.-X. Zhao is with School of Information and Control Engineering,
China University of Mining and Technology,
Xuzhou 221116, China; and with  the School of Mathematics and Statistics,  Jiangsu Normal University, Xuzhou 221116,
China.

}
}

\IEEEtitleabstractindextext{%
\begin{abstract}
 One of the most powerful methods of color image recognition is the two-dimensional principle component analysis (2DQPCA) approach, which is based on  quaternion representation and preserves color information very well.  However, the current versions of 2DQPCA are still not feasible to extract different geometric properties of color images according to practical data analysis requirements and  they are vulnerable to strong noise. In this paper, a generalized 2DQPCA approach with weighting is presented with imposing $L_{p}$ norms on both constraint  and objective functions. As  a unit  2DQPCA framework, this new version makes it possible to choose adaptive regularizations and constraints according to  actual  applications and  can extract both geometric properties and color information of color images.  The projection vectors generated by the deflating scheme are required to be orthogonal to each other.  A weighting matrix is defined to magnify the effect of main features. This overcomes the shortcomings of traditional 2DQPCA that the recognition rate decreases as the number of principal components increases. The numerical results based on the real face databases validate that the newly proposed method is robust to noise and  performs better than the state-of-the-art 2DQPCA-based algorithms and four  prominent deep learning methods.
\end{abstract}

\begin{IEEEkeywords}
Generalized 2DQPCA; Weighted projection; Color face recognition; Color image reconstruction; Neural networks.
\end{IEEEkeywords}}
\maketitle

\IEEEraisesectionheading{\section{Introduction}\label{sec:introduction}}
\noindent
\IEEEPARstart{C}olor image recognition becomes more and more important in recent data science.
Two dimensional quaternion principle analysis  was proposed to scratch the features of color face images  \cite{jlz17:2DQPCA} and  had been well developed for the dimensional reduction  and the color image reconstruction  \cite{zjg19:im-2dqpca,xz19:2DPCA-S,xcgz20:2DQS,zjccg20:Advanced2DPCA}.  This contributes to the development of the fundamental tool---the principal component analysis---of  machine learning frameworks.  The two dimensional quaternion principle analysis based approaches have the advantages at  preserving the spatial structure and color information of color images and costing less computational operations. However, the geometric properties (such as sparsity) of color images are not well extracted and the current versions of two dimensional quaternion principle analysis are still not robust to treat color images polluted by noise.   
In this paper, we present a generalized two dimensional quaternion principle analysis with utilizing $L_p$-norms of quaternion vectors to obtain more  geometrical and color information and to resist more kinds of noises.  Our aim is to present a new approach for color image recognition, which  significantly promotes the robustness and the  recognition rate.

It is well known that  two dimensional quaternion principle analysis (2DQPCA) generalizes two dimensional  principle analysis (2DPCA)   \cite{yzfy04:2dpca} to the quaternion skew-field with a strong motivation of applying the color information of color images.  Recall that principal component analysis (PCA) \cite{tp91} is an unsupervised learning approach for feature extraction and dimension reduction. It has been widely used in the fields of computer vision and pattern recognition. Recently, many robust PCA (RPCA)   algorithms are proposed with improving the quadratic formulation, which renders PCA vulnerable to noises, into $L_{1}$-norm on the objection function, e.g., $L_{1}$-PCA \cite{kk05:L1pca}, $R_{1}$-PCA \cite{dz06:R1pca}, and PCA-$L_{1}$ \cite{KN08:pcaL1}. Besides of robustness, sparsity is also a desired property. By applying $L_{0}$-norm or $L_{1}$-norm on the constraint function (also called penalty or regularization \cite{sl09,zh05}) of PCA, a series of sparse PCA (SPCA) algorithms  have been proposed in \cite{zht06:spca,sh08:spca,jnrs10:spca,ojht16:spca,rsbx18:spca}. In order to enhance both robustness and sparsity at the same time, robust sparse PCA (RSPCA) is further developed, see \cite{mzx12:rspca} and \cite{pzlw19:rspca} for instance.  Considering that $L_{0}$-norm, $L_{1}$-norm and $L_{2}$-norm are all special cases of $L_{p}$-norm, it is natural to replace the $L_{2}$-norm in traditional PCA with arbitrary norm, both on its objective function and constraint function, as proposed in generalized PCA (GPCA)\cite{lxz13:gpca}, \cite{nk14:gpca}.

When being applied to extract feature from images, PCA treats each sample as a vector, and hence, 2D images are converted to high-dimensional vectors prior to feature extraction \cite{tp91}. To avoid the intensive computation of high-dimensional data, Yang \textit{et al.} \cite{yzfy04:2dpca} proposed the 2DPCA approach, which constructs the covariance matrix by directly using 2D face image matrices. 2DPCA directly constructs the sample covariance matrix from the two-dimensional image, which reduces the burden of calculating the sample covariance matrix from the high-dimensional vectors, and therefore has higher computational efficiency. In addition, 2DPCA processes a two-dimensional matrix to preserve the spatial structure of the images \cite{yzfy04:2dpca,yj05:2dpca,zz05:2dpca}, and achieves a higher face recognition rate than PCA in most cases.  This image-as-matrix method offers insights for improving above RSPCA, PCA-$L_{p}$, GPCA, etc.  The $L_{1}$-norm based 2DPCA (2DPCA-$L_{1}$)\cite{lpy10:2dpcaL1} and 2DPCA-$L_{1}$ with sparsity (2DPCA-$L_{1}$S) \cite{mz12:RS2DPCA} are two typical improvements of PCA-$L_{1}$ and RSPCA, respectively. And the generalized 2DPCA (G2DPCA) \cite{jw16:g2dpca} imposes $L_{p}$-norm on both objective and constraint functions of 2DPCA. Recently, Chen \textit{et al.} \cite{cjcz19:R2DPCA} further proposed the R2DPCA algorithm, which utilizes the label information (if known) of training samples to calculate a relaxation vector and presents a weight to each subset of training data.

To process color images, the above method separately processes three color channels or connects the representations of different color channels into a large matrix, so cross-channel correlation is not considered \cite{mes08}, \cite{lz16}. However, this correlation is very important for color image processing. Xiang \textit{et al.} \cite{xyc15:CPCA} proposed a CPCA approach for color face recognition. They utilized a color image matrix representation model based on the framework of PCA and
applied 2DPCA to compute the optimal projection for feature extraction. Zou \textit{et al.} \cite{zkw16:QColl} then presented quaternion collaborative representation-based classification (QCRC) and quaternion sparse RC (QSRC)  using quaternion $L_1$ minimization. For color face recognition, a series of quaternion-based methods, such
as the quaternion PCA (QPCA) \cite{bisa03:QPCA}, the two-dimensional QPCA (2DQPCA), the bidirectional 2DQPCA \cite{scy11:bid2DQPCA}, the kernel QPCA (KQPCA) and the two-dimensional KQPCA \cite{{cyjz17:KQPCA}}, have been proposed, with generalizing the conventional PCA and 2DPCA. Recently, Jia \textit{et al.}\cite{jlz17:2DQPCA} presented the 2DQPCA approach based on quaternion models with reducing the feature dimension in row direction.
 Xiao and Zhou \cite{xz19:2DPCA-S} proposed novel quaternion ridge regression models for 2DQPCA with reducing the feature dimension in column direction and two-dimensional quaternion sparse principle component analysis. Zhao \textit{et al.} \cite{zjg19:im-2dqpca} further improved the 2DQPCA into abstracting the features of quaternion matrix samples in both row and column directions.
 Lately, Xiao \textit{et al.} \cite{xcgz20:2DQS} proposed a two-dimensional quaternion sparse discriminant analysis (2D-QSDA), including sparse regularization, that meets the requirements of representing RGB and RGB-D images.
  These 2DQPCA-like approaches can preserve the spatial structure of color images and have a low computation cost. They have achieved a significant success in promoting the robustness and the ratio of face recognition by utilizing color information.

In addition to PCA-like methods, LDA and its variants are still one kind of feature extraction algorithms which play an important role in pattern recognition and computer vision; see \cite{yl05:2DLDA,nks06:2DLDA,lls08:2DLDA} for instance.  Besides of color face recognition, many recent studies (e.g. \cite{jns19:lansvd,jns19:QMC,jnw19,cxz20:LRQ}) also have shown that the quaternion framework is well adapted to color image restoration by encoding the color channels into the three imaginary parts.

Nowadays, the color face recognition problem faces more complicated backgrounds, noise interference, and higher recognition accuracy requirements. There is still a lack of  mathematical models that can reflect physical properties, are more interpretable, and are better than neural network methods. Thus, we took a small step towards this goal, established a general mathematical model and established a comprehensive and systematic solution method. 
The contribution of this paper is listed  in three aspects.
\begin{itemize}
\item A novel generalized two-dimensional quaternion principal component analysis (G2DQPCA) is presented as is a general framework that offers the great flexibility to fit various real-world applications.
To utilize the first-order condition of convex quaternion function, a new definition is proposed for the derivative of the quaternion norm.   
   A closed-form solution is obtained at each step of iteration.

\item  A new weighted G2DQPCA (WG2DQPCA) approach is proposed for color face recognition.
The projection bases are mathematically required to be orthogonal to each other and are weighted by corresponding objective function value to enhance the role of main features in color image recognition.

\item
Based on three standard color face databases, the evaluation of proposed methods is implemented on color face recognition and color image reconstruction. Numerical results indicate that  WG2DQPCA performs better than the state-of-the-art 2DQPCA-based algorithms and four  prominent deep learning methods.

\end{itemize}

The paper is organized as follows. In Section \ref{s:preview}, the fundamental information of quaternion  matrix theory is recalled.
In Section \ref{s:G2DQPCA}, a generalized two-dimensional color principal component analysis with weighted projection approach is presented based on quaternion models and  a closed-form solution is derived. In Section \ref{s:application},  the WG2DQPCA and G2DQPCA approaches are proposed for face recognition and image reconstruction, respectively.
In Section \ref{s:experiments}, numerical experiments are conduct by applying the Georgia Tech face  database, Color FERET face database and the  Faces95 database. Finally, the conclusion is given in Section \ref{s:conclusion}.
\section{Preliminaries}\label{s:preview}
In this section, we recall the basic information of quaternions, quaternion vectors and quaternion matrices.

Let us firstly describe some notation.~Letters of regular font denote scalars, vectors and matrices in the real domain. Boldface letters  denote quaternions, quaternion vectors and quaternion matrices.~${\rm sign}(\cdot)$ denotes the sign function;~$|\cdot|$ denotes the absolute value; $w\circ v$ denotes the
Hadamard product, i.e., the element-wise product between two
vectors; $\|\cdot\|_{1}$,~$\|\cdot\|_{2}$,~and~$\|\cdot\|_{p}$ denote $L_{1}$-norm,
$L_{2}$-norm, and $L_{p}$-norm, respectively.

Let  $\iq,~\jq,~\kq$ be three imaginary units satisfying
   \begin{equation*}\label{e1}
   \iq^2=\jq^2=\kq^2=\iq\jq\kq=-1
\end{equation*}
and let 
$$\Q\!=\!\left\{\aq=a^{(0)}+a^{(1)}\iq+a^{(2)}\jq+a^{(3)}\kq~|~a^{(0)},\cdots,a^{(3)}\in\R\right\}$$
 denote the quaternion skew-field.
\begin{definition}
For a quaternion $\aq\in\Q$  and a positive real number $\alpha\in\R$,  the $s$-absolute value of $\aq$  is defined by
$$|\aq|_\alpha\equiv \sqrt[\alpha]{|a^{(0)}|^\alpha+|a^{(1)}|^\alpha+|a^{(2)}|^\alpha+|a^{(3)}|^\alpha}.$$
\end{definition}
\noindent
For illustration, we plot the set of purely quaternion numbers satisfying $|\aq|_\alpha\le 1$ in Figure \ref{f:Lpnorm}.
If  $\alpha=2$, then $|\aq|_\alpha$ is exactly the absolute of $\aq$,  shortly denoted by  $|\aq|$.
The sign of a quaternion $\aq$ is defined by
\begin{equation*}
  \setlength{\nulldelimiterspace}{0pt}
  {\rm sign}(\aq)\equiv\left\{\begin{IEEEeqnarraybox}[\relax][c]{l's}
  \aq/|\aq|,&if $\aq\neq 0,$\\
  0,&if $\aq=0.$%
  \end{IEEEeqnarraybox}\right.
\end{equation*}
An $m\times n$ quaternion matrix is of the form $\Aq=A^{(0)}+A^{(1)}\iq+A^{(2)}\jq+A^{(3)}\kq$,  where $A^{(0)},\cdots, A^{(3)} \in \mathbb{R}^{m\times n}$.  A quaternion is a {\it pure quaternion} if its real part is zero. A  {\it pure quaternion matrix } is a matrix  whose elements are pure quaternions~$(A^{(0)}=0)$~or zero.  In the RGB color space, a  pixel can be represented with a pure quaternion,  $r\mathbf{i}+g\mathbf{j}+b\mathbf{k}$,  where  $r,~g,~b$  stand for the values of Red, Green and Blue components, respectively.  An $m\times n$ color image can be saved as an $m\times n$  pure quaternion matrix, $\Aq=[\aq_{ij}]_{m\times n}$,  in which each entry,  $\aq_{ij}=r_{ij}\iq+g_{ij}\jq+b_{ij}\kq$,  denotes one color pixel, and  $r_{ij}$, $g_{ij}$ and $b_{ij}$ are nonnegative integers \cite{jnw19,pcd03}.

\begin{figure}[!t]
  \centering
    \setlength{\abovecaptionskip}{0.0cm}
  \setlength{\belowcaptionskip}{-0.5cm}
 \includegraphics[width=0.45\textwidth,height=0.3\textwidth]{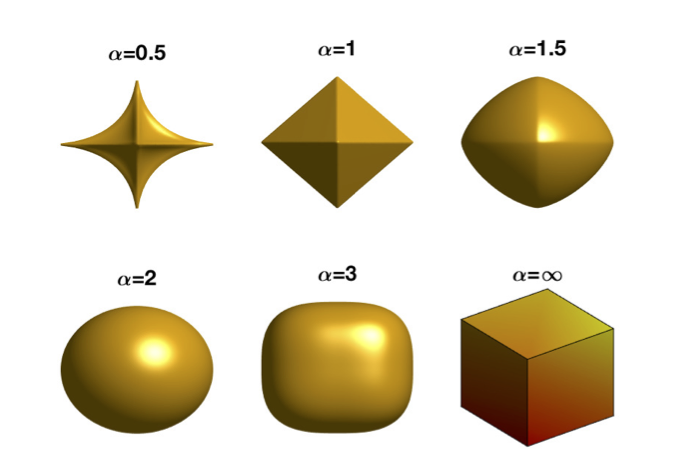}
\caption{Purely imaginary quaternions with $s$-absolute values equal to or less than $1$  ($|\aq|_{\alpha}\le1$).
}
\label{f:Lpnorm}
\end{figure}

\subsection{Norms of  quaternion vectors}
An $n$-dimensional quaternion vector is of the form
 \begin{equation*}\label{d:wq}
 \begin{aligned}
   \wq &=[\wq_{i}]_n =\:w^{(0)}+w^{(1)}\iq+w^{(2)}\jq+w^{(3)}\kq,
 \end{aligned}
\end{equation*}
where $\wq_{i}\!\in\Q$ denotes  the $i$-th entry  of $\wq$, and $w^{(0)}$, $\cdots$, $w^{(3)}\in\R^n$.
The sign and the absolute value of a quaternion vector are defined in the element-wise manner, that is
 \begin{IEEEeqnarray}{rCl}\label{e:abssign}
   |\wq|&\equiv&[\setlength{\arraycolsep}{0.2em} \begin{array}{c}|\wq_{i}|\end{array}\setlength{\arraycolsep}{5pt}]_{n}\IEEEyesnumber\IEEEyessubnumber\label{e:absv},\\
   {\rm sign}(\wq)&\equiv&[\setlength{\arraycolsep}{0.2em} \begin{array}{c}  {\rm sign}(\wq_{i})\end{array}\setlength{\arraycolsep}{5pt}]_{n}.\IEEEyessubnumber\label{e:signv}
 \end{IEEEeqnarray}
\begin{definition}
Let $\alpha$ and $p$ be two positive real numbers,   the $L_{\alpha,p}$-norm of  a quaternion vector $\wq\in\Q^n$ is defined by
\begin{equation*}\label{e:Lspnorm}
  \|\wq\|_{\alpha,p}\equiv\left(\sum\limits_{i=1}^{n}|\wq_{i}|_\alpha^{p}\right)^{\frac{1}{p}}.
\end{equation*}
\end{definition}
If $\alpha=p=2$,  the $L_{\alpha,p}$-norm reduces to the $L_2$-norm.
Because the $\alpha$-absolute value  quaternions have different geometric properties, people expect to apply different $L_{\alpha,p}$-norms in the regularization constrains  to  obtain  especial  features of color images.
In the following text, we set $\alpha$  as the default value $2$ to clarify the reasonability of the theory,  and  drop $\alpha$ from the subscript for simplicity.  That means the $L_{p}$-norm of  quaternion vector $\wq$ is denoted by
\begin{equation}\label{e:Lpnorm}
  \|\wq\|_{p}=\left(\sum\limits_{i=1}^{n}|\wq_{i}|^{p}\right)^{\frac{1}{p}}.
\end{equation}

\subsection{The real structure-preserving method}
Since the computation of quaternion matrices usually costs a lot of CPU times, we always apply the structure-preserving method (see \cite{jwl13}, \cite{jwzc18}, etc.) to simulate the calculation of quaternions only by real operations.

For quaternion matrix  $\Aq=A^{(0)}+A^{(1)}\iq+A^{(2)}\jq+A^{(3)}\kq\in\Q^{m\times n}$ and quaternion vector  $\wq=w^{(0)}+w^{(1)}\iq+w^{(2)}\jq+w^{(3)}\kq\in\Q^{n}$, we define
their real representations by
\begin{equation}\label{real-counterpart}
  \Aq^{(\Upsilon)}\equiv\left[
                 \begin{array}{rrrr}
                   A^{(0)} & -A^{(1)} & -A^{(2)} & -A^{(3)} \\
                   A^{(1)} & A^{(0)} & -A^{(3)} & A^{(2)} \\
                   A^{(2)} &A^{(3)}& A^{(0)} & -A^{(1)} \\
                   A^{(3)} & -A^{(2)} & A^{(1)} &A^{(0)} \\
                 \end{array}
               \right]
\end{equation}
and
\begin{equation}\label{d:real-vec}
  \wq^{(\gamma)}\!\equiv\!\left[
           \setlength{\arraycolsep}{0.2em}
            \begin{array}{llll}
              (w^{(0)} )^T&(w^{(1)})^T&(w^{(2)})^T&(w^{(3)})^T
            \end{array}
            \setlength{\arraycolsep}{5pt}
          \right]^T,
\end{equation}
respectively.
Note that   $\wq^{(\gamma)}$ is a $4n$-dimensional real vector. For convenience,
the absolute value, sign  and $L_{2,p}$-norm of $\wq^{(\gamma)}$ are defined by
 \begin{subequations}\label{signQ}
  \begin{align}
\label{e:absQ}  &|\wq^{(\gamma)}|_Q \equiv (|\wq|+|\wq|\iq+|\wq|\jq+|\wq|\kq)^{(\gamma)},\\
\label{e:signQ}                    &{\rm signQ}(\wq^{(\gamma)})\equiv({\rm sign}(\wq))^{(\gamma)},\\
&\|\wq^{(\gamma)}\|_{2,p}
  \equiv
  \left[\sum\limits_{i=1}^{n}\left(\sqrt{\sum\limits_{k=0}^3(\wq_{kn+i}^{(\gamma)})^{2}}\right)^{p}\right]^{\frac{1}{p}}.
  \end{align}
  \end{subequations}
It's  easy to verify that $\|\wq^{(\gamma)}\|_{2,p}$ is a vector norm on the real vector space if $p\geq1$.  If $0<p<1$, $\|\wq^{(\gamma)}\|_{2,p}$ is regarded as a non-convex and non-Lipschitz continuous function.
The $L_p$-norm of quaternion vector $\wq$ can be equivalently  written as
\begin{equation*}\label{def:norm_2,p}
\begin{aligned}
  \|\wq\|_{p}^{p}&=\sum\limits_{i=1}^{n}|\wq_{i}|^{p}
  =\sum\limits_{i=1}^{n}\left(\sqrt{\sum\limits_{j=0}^{3}(w_{i}^{(j)})^{2}}\right)^{p}= \|\wq^{(\gamma)}\|_{2,p}^{p}.
  \end{aligned}
\end{equation*}

 The definitions in \eqref{signQ} make it convenient to convert between quaternion vector norm and real vector norm equivalently.

\section{Generalized 2DQPCA}\label{s:G2DQPCA}
\noindent
In this section,  we present a new generalized 2DQPCA (G2DQPCA)  and quaternion optimization algorithms.

\subsection{G2DQPCA}
Suppose that  $\Fq_1,\Fq_2,...,\Fq_\ell\in\mathbb{Q}^{m\times n}$ are $\ell$ training samples of the quaternion matrix form and their mean  is
 \begin{equation*}
 {\bf\Psi}=\frac{1}{\ell}\sum\limits_{i=1}^{\ell}\Fq_i\in\H^{m\times n}.
 \end{equation*}
In the new G2DQPCA, the features are extracted  by  solving a quaternion optimization model with  $L_{s}$-norm in the objective function and $L_{p}$-norm in constraints:
\begin{equation}\label{e:g2dpcaQ}
\left.
  \begin{array}{c}
     (\widehat{\wq}_1,\cdots,\widehat{\wq}_k)\!=\!\!\!\!\!\mathop{{\rm arg~max}}\limits_{\wq_{1},\cdots,\wq_{k}\in\mathbb{Q}^{n}}\!\sum\limits_{j=1}^{k}\sum\limits_{i=1}^{\ell}\|(\Fq_{i}-{\bf\Psi})\wq_{j}\|_{s}^{s}, \\
    s.t.~\|\wq_{j}\|_{p}^{p}=1,~\wq_{j}^{*}\wq_{i}=0 , i\neq j, i,j=1,\cdots,k, \\
  \end{array}
\right.
\end{equation}
where $s\geq 1,p>0.$
The constraint  $\|\wq_{j}\|_{p}^{p}=1$ is convex if $p\ge 1$ and non-convex if $0<p<1$.    The constraint set is either convex or non-convex depending on the  value of $p$.

Now we are concerned with  the solvability of the optimization problem $\eqref{e:g2dpcaQ}$. As  primary results,  several necessary inequalities are derived  for  a linear optimization problem with $L_{p}$-norm in quaternion domain.  Their proofs are based on the first-order condition of convex function \cite{sl09}.
Let $w,v\in\R^{n}$ be two real vectors and let  $f(w)$   be a convex and differentiable a real function.
Then
\begin{equation}\label{convex-function}
  f(w)\geq f(v)+\nabla f(v)^{T}(w-v),
\end{equation}
where $\nabla$ denotes the gradient operator, and  the equality holds when $w = v$.
\begin{theorem}\label{lemma-1}
  Let $\wq,\vq\in\H^{n}$ be two quaternion vectors and $p\geq 1$, then there is
\begin{equation}\label{e:convex}
\begin{array}{lcl}
  \|\wq\|_p^{p}&\;\geq\;& p[|\vq^{(\gamma)}|_Q^{p-1}\circ {\rm signQ}(\vq^{(\gamma)})]^{T}\wq^{(\gamma)}\\
  &&+(1-p)\|\vq\|_{p}^{p}
  \end{array}
\end{equation}
 and the equality holds when $\wq=\vq$.
\end{theorem}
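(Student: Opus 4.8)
\emph{Proof idea.} The plan is to push the inequality down to the real vector space through the identity \eqref{def:norm_2,p}, $\|\wq\|_p^p=\|\wq^{(\gamma)}\|_{2,p}^p$, and then to apply the first-order convexity bound \eqref{convex-function}. Introduce the real function $g:\R^{4n}\to\R$, $g(x)=\|x\|_{2,p}^p=\sum_{i=1}^{n}\big(\sum_{j=0}^{3}(x_i^{(j)})^2\big)^{p/2}$, where $x$ is partitioned into the four blocks dictated by the real representation $(\gamma)$ and $x_i^{(j)}$ denotes the $i$-th entry of the $j$-th block. Then $\|\wq\|_p^p=g(\wq^{(\gamma)})$ and $\|\vq\|_p^p=g(\vq^{(\gamma)})$, so it suffices to bound $g(\wq^{(\gamma)})$ from below by its first-order Taylor approximation at $\vq^{(\gamma)}$.

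First I would check that $g$ is convex for $p\ge1$: each summand is the composition of the convex nondecreasing map $t\mapsto t^p$ on $[0,\infty)$ with the Euclidean norm of the quadruple $(x_i^{(0)},\dots,x_i^{(3)})$, hence convex, and a finite sum of convex functions is convex. For $p>1$ the function $g$ is moreover $C^1$: differentiating gives $\partial g/\partial x_i^{(j)}=p\big(\sum_{k}(x_i^{(k)})^2\big)^{(p-2)/2}x_i^{(j)}$, whose norm over the quadruple tends to $0$ as the quadruple tends to the origin, so the gradient exists everywhere. For $p=1$ the summands fail to be differentiable at points where a quadruple vanishes; there I would either invoke the subgradient version of \eqref{convex-function}, or simply note that in this case $(1-p)\|\vq\|_p^p=0$ and the asserted bound reduces to $\|\wq\|_1\ge\sum_i\langle{\rm sign}(\vq_i),\wq_i\rangle$, which follows from Cauchy--Schwarz on each quadruple together with $\|{\rm sign}(\vq_i)\|_2\le1$.

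Next I would identify $\nabla g(\vq^{(\gamma)})$ with the vector multiplying $\wq^{(\gamma)}$ in \eqref{e:convex}. By \eqref{signQ}, the vector ${\rm absQ}(\vq^{(\gamma)})$ has all four blocks equal to $[\,|\vq_i|\,]_n$ and ${\rm signQ}(\vq^{(\gamma)})$ has $j$-th block $[\,v_i^{(j)}/|\vq_i|\,]_n$ (with the convention $0/0=0$ inherited from ${\rm sign}(0)=0$); hence the $(i,j)$ entry of ${\rm absQ}(\vq^{(\gamma)})^{p-1}\circ{\rm signQ}(\vq^{(\gamma)})$ equals $|\vq_i|^{p-2}v_i^{(j)}$, which is exactly $\tfrac1p$ times the $(i,j)$ entry of $\nabla g(\vq^{(\gamma)})$ computed above. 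Substituting $w=\wq^{(\gamma)}$, $v=\vq^{(\gamma)}$ in \eqref{convex-function} then gives
\[
\|\wq\|_p^p\ \ge\ \|\vq\|_p^p+p\,[{\rm absQ}(\vq^{(\gamma)})^{p-1}\circ{\rm signQ}(\vq^{(\gamma)})]^T(\wq^{(\gamma)}-\vq^{(\gamma)}).
\]
It remains to simplify the constant term: $[{\rm absQ}(\vq^{(\gamma)})^{p-1}\circ{\rm signQ}(\vq^{(\gamma)})]^T\vq^{(\gamma)}=\sum_{i=1}^{n}\sum_{j=0}^{3}|\vq_i|^{p-2}(v_i^{(j)})^2=\sum_{i=1}^{n}|\vq_i|^p=\|\vq\|_p^p$, so $\|\vq\|_p^p-p\|\vq\|_p^p=(1-p)\|\vq\|_p^p$, which is precisely \eqref{e:convex}; the equality case $\wq=\vq$ is inherited from the equality case of \eqref{convex-function}.

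The differentiation and the block-index bookkeeping are routine; the one genuinely delicate point is the non-differentiability of $g$ on the set where some coordinate quadruple of $\vq$ vanishes (in particular throughout the case $p=1$), which I would dispose of by the subgradient argument or the elementary Cauchy--Schwarz estimate indicated above, checking in passing that the convention ${\rm sign}(0)=0$ keeps both sides of \eqref{e:convex} well defined and the bound tight at $\wq=\vq$.
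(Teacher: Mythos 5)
Your proposal is correct and follows essentially the same route as the paper: reduce via the real structure-preserving representation \eqref{def:norm_2,p}, exploit convexity of $\|\cdot\|_{2,p}^p$ through the first-order condition \eqref{convex-function}, identify the gradient at $\vq^{(\gamma)}$ with $p\,[{\rm absQ}(\vq^{(\gamma)})^{p-1}\circ{\rm signQ}(\vq^{(\gamma)})]$, and simplify the constant term exactly as in \eqref{simple_vr}. The only (harmless) difference is how the degenerate points are treated: the paper handles zero entries of $\vq$ by summing the single-quaternion inequality \eqref{e:element} entrywise, whereas you observe that for $p>1$ the function is globally $C^1$ (gradient vanishing at zero quadruples) and dispose of $p=1$ by a subgradient or Cauchy--Schwarz argument, which is an equally valid bookkeeping of the same idea.
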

\begin{proof}
Denote  $\wq=[\wq_1,\cdots,\wq_n]^T$, ~$\vq=[\vq_1,\cdots,\vq_n]^T$, ~$\wq_i,\vq_i\in\Q$.  By the real structure-preserving method, we only need to prove
\begin{equation*}
\begin{aligned}
  \|\wq^{(\gamma)}\|_{2,p}^{p}&\geq p[|\vq^{(\gamma)}|_Q^{p-1}\circ {\rm signQ}(\vq^{(\gamma)})]^{T}\wq^{(\gamma)}\\
  &+(1-p)\|\vq^{(\gamma)}\|_{2,p}^{p}.
  \end{aligned}
\end{equation*}
  Firstly, we assume all entries of $\vq$ are non-zero elements. Then $\|\wq^{(\gamma)}\|_{2,p}^{p}$ is differentiable at $\wq^{(\gamma)}=\vq^{(\gamma)}$, ~and
  \begin{equation*}
    \frac{\partial\|\wq^{(\gamma)}\|_{2,p}^{p}}{\partial \wq^{(\gamma)}}=\left[
                                                         \begin{array}{cccc}
                                                           y_{0}^T&
                                                           y_{1}^T &
                                                           y_{2}^T &
                                                           y_{3}^T
                                                         \end{array}
                                                       \right]^T,
   \end{equation*}
 where for $j=0,1,2,3$,
 $$\  y_j = \left[
\begin{array}{cccc}
p\frac{|\wq_{1}|^{p-1}}{|\wq_{1}|}w_{1}^{(j)} &
   p\frac{|\wq_{2}|^{p-1}}{|\wq_{2}|}w_{2}^{(j)} &
   \cdots&
   p\frac{|\wq_{n}|^{p-1}}{|\wq_{n}|}w_{n}^{(j)}
          \end{array}
     \right]^T.$$\\
Applying the definitions in \eqref{signQ},~we obtain
\begin{equation}\label{diff-v}
\begin{aligned}
 \nabla\|\vq^{(\gamma)}\|_{2,p}^{p}&\equiv \frac{\partial\|\wq^{(\gamma)}\|_{2,p}^{p}}{\partial \wq^{(\gamma)}}\left|_{\wq=\vq}\right.\\
 &= p\cdot |\vq^{(\gamma)}|_Q^{p-1}\circ {\rm signQ}(\vq^{(\gamma)}).
\end{aligned}
\end{equation}
Clearly,
\begin{equation}\label{simple_vr}
\begin{aligned}
  |\vq^{(\gamma)}|_Q^{p-1}\circ {\rm signQ}(\vq^{(\gamma)}))^{T}\vq^{(\gamma)}
  \!=\!\|\vq^{(\gamma)}\|_{2,p}^{p}.
\end{aligned}
\end{equation}
Additionally, $\|\wq^{(\gamma)}\|_{2,p}^{p}$  is   a convex function because  $p \geq 1$. Together with equality  \eqref{diff-v}, applying the first-order convexity condition \eqref{convex-function} to the function $f(\cdot)=\|\cdot\|_{2,p}^p$ yields the desired result
\begin{equation}\label{e:lemma1}
\begin{aligned}
  &\|\wq^{(\gamma)}\|_{2,p}^{p} \\
 &\geq \|\vq^{(\gamma)}\|_{2,p}^{p}+(\nabla\|\vq^{(\gamma)}\|_{2,p}^{p})^{T}(\wq^{(\gamma)}-\vq^{(\gamma)})\\
  &=p[|\vq^{(\gamma)}|_Q^{p-1}\circ {\rm signQ}(\vq^{(\gamma)})]^{T}\wq^{(\gamma)}\\
 &\quad +(1-p)\|\vq^{(\gamma)}\|_{2,p}^{p}
  \end{aligned}
\end{equation}
where the  equality holds  when $\wq=\vq$.

Now we consider the case that    $\vq$ has zero entries.  For any two quaternions $\wq_i$ and $\vq_i$, there is
\begin{equation}\label{e:element}
\begin{aligned}
|\wq_{i}|^{p}
&\geq p\cdot|\vq_{i}|^{p-1}\cdot({\rm signQ}({\vq_{i}^{(\gamma)}}))^{T}\wq_{i}^{(\gamma)}\\
&+(1-p)|\vq_{i}|^{p}.
\end{aligned}
\end{equation}
If $\vq_{i}\neq 0$, this inequality is exactly \eqref{e:lemma1} being applied to two quaternions, because
$|\wq_i|^{p}=\left(\sqrt{\sum_{j=0}^{3}(w_{i}^{(j)})^{2}}\right)^{p}$
is convex and differentiable at $\wq_{i} = \vq_{i}$.
If $\vq_{i}= 0$, the inequality  \eqref{e:element}   reduces to $|\wq_{i}|^p \geq 0$, and  surely holds since ${\rm signQ}({\vq_{i}^{(\gamma)}})=0$ according the definition \eqref{e:signQ} (and under the assumption  $0^0=1$).
The equality in \eqref{e:element} holds  when $\wq_{i} = \vq_{i}$.
 Summing up \eqref{e:element} together, we obtain
\begin{equation*}
\begin{aligned}
&\sum\limits_{i=1}^{n}\left(\sqrt{\sum\limits_{j=0}^{3}(w_{i}^{(j)})^{2}}\right)^{p}\\
&\geq p\sum\limits_{i=1}^{n}\left[|\vq_{i}|^{p-1}\cdot({\rm signQ}({\vq_{i}^{(\gamma)}}))^{T}\wq_{i}^{(\gamma)}\right]\\
&+(1-p)\sum\limits_{i=1}^{n}\left(\sqrt{\sum\limits_{j=0}^{3}(v_{i}^{(j)})^{2}}\right)^{p}.
\end{aligned}
\end{equation*}

Consequently, \eqref{e:convex} holds and the inequality becomes equality when $\wq = \vq$ no matter $\vq$ has zero
entries or not. This completes the proof.
\end{proof}

Let $w\in\R^{n}$, $v\in\R^{n}$, and $p,q\in[1,\infty]$ be two scalars with $1/p+1/q=1$.
Then the H$\ddot{o}$lder's inequality\cite{WH91} states that
\begin{equation}\label{holder}
  \sum\limits_{i=1}^{n}|v_{i}w_{i}|\leq\|v\|_{q}\|w\|_{p}.
\end{equation}
The equality holds if and only if there exists a positive real
scalar $c$ satisfying $|w_{i}|^{p} = c |v_{i}|^{q}, i = 1, 2,\cdots,n$.
Since two arbitrary quaternions are not comparable, here we consider the linear optimization problem with $L_{p}$-norm
based on the H$\ddot{o}$lder's inequality.
\begin{lemma}\label{lemma-2}
Let $\wq\in\H^{n},\vq\in\H^{n}/\{0\}$, and let $p,q\in[1,\infty]$ be two scalars satisfying $1/p+1/q=1$.  Then the quaternion optimization problem
\begin{equation}\label{opt-p}
  \mathop{{\rm max}}\limits_{\wq}{\rm }\vq^{*}\wq, ~s.t.~\|\wq\|_{p}^{p}=1
\end{equation}
has a closed-form solution
\begin{equation*}
  \wq^{(\gamma)} = \frac{|\vq^{(\gamma)}|_Q^{q-1}}{\|\vq^{(\gamma)}\|_{2,q}^{q-1}}\circ{\rm signQ}(\vq^{(\gamma)}).
\end{equation*}
\end{lemma}
\begin{proof}
  According to the representation \eqref{d:real-vec}, the quaternion optimization problem \eqref{opt-p} is equivalently rewritten to
  \begin{equation*}
      \mathop{{\rm max}}\limits_{\wq^{(\gamma)}}(\vq^{(\gamma)})^{T}\wq^{(\gamma)}, ~s.t.~\|\wq^{(\gamma)}\|_{2,p}^{p}=1.
  \end{equation*}
Based on the H$\ddot{o}$lder's inequality \eqref{holder}, we have
  \begin{equation*}
  \begin{aligned}
    &(\vq^{(\gamma)})^{T}\wq^{(\gamma)}\leq \sum_{i=1}^{n}\sum_{j=0}^{3}|v_{i}^{(j)}w_{i}^{(j)}|\leq
    \sum\limits_{i=1}^{n}|\vq_{i}||\wq_{i}|=|\vq|^T|\wq|\\
    &\leq\|~|\vq|~\|_{q}\|~|\wq|~\|_{p} =\|\vq^{(\gamma)}\|_{2,q}\|\wq^{(\gamma)}\|_{2,p}
             =\|\vq^{(\gamma)}\|_{2,q}.
  \end{aligned}
  \end{equation*}
Therefore, the maximum of the objective function is obtained when the inequalities become equalities.
The equality in $(\vq^{(\gamma)})^{T}\wq^{(\gamma)}\leq \sum\limits_{i=1}^{n}\sum\limits_{j=0}^{3}|v_{i}^{(j)}w_{i}^{(j)}|$ holds when
\begin{equation*}
  {\rm sign}(v_{i}^{(j)})={\rm sign}(w_{i}^{(j)}).
\end{equation*}
The equality in $\sum\limits_{i=1}^{n}\sum\limits_{j=0}^{3}|v_{i}^{(j)}w_{i}^{(j)}|\leq
    \sum\limits_{i=1}^{n}|\vq_{i}||\wq_{i}|$
    holds when
\begin{equation}\label{condition:2}
  |w_{i}^{(j)}|^{2}=c_{i}|v_{i}^{(j)}|^{2},~j=0,1,2,3.
\end{equation}
 If $\vq_{i}=0$, we set  $c_i=0$ and $w_{i}^{(j)}=0$.
If $\vq_{i}\neq0$, then  the constant $c_{i}$ is calculated by
\begin{equation}\label{condition:c1}
  c_{i} = \frac{\sum\limits_{j=0}^{3}|w_{i}^{(j)}|^{2}}{\sum\limits_{j=0}^{3}|v_{i}^{(j)}|^{2}}=\frac{|\wq_{i}|^{2}}{|\vq_{i}|^{2}}.
\end{equation}
Inputting \eqref{condition:c1} into \eqref{condition:2}  yields the expression
\begin{equation}\label{wri}
\begin{aligned}
  |w_{i}^{(j)}|=\left(c_{i}|v_{i}^{(j)}|^{2}\right)^{\frac{1}{2}}
  &=\frac{|\wq_{i}|}{|\vq_{i}|}|v_{i}^{(j)}|.
\end{aligned}
\end{equation}
Then we have
\begin{equation}\label{wri2}
\begin{IEEEeqnarraybox}[\relax][c]{l's}
  w_{i}^{(j)}=\frac{|\wq_{i}|}{|\vq_{i}|}|v_{i}^{(j)}|{\rm sign}(v_{i}^{(j)}),&~{\rm if}~$\vq_i\neq 0$;\\
  ~w_{i}^{(j)}=0,&~{\rm if}~$\vq_i=0$.%
  \end{IEEEeqnarraybox}
\end{equation}

For simplicity, we introduce an  auxiliary  variable $Y^{\wq}$ to denote the absolute of quaternion vector $\wq$.
The equality in  $|\vq|^T|\wq|
    \leq\|~|\vq|~\|_{q}\|~|\wq|~\|_{p} $ holds when
\begin{equation}\label{wi:cvi}
  |Y_{i}^{\wq}|^{p} = c|Y_{i}^{\vq}|^{q},i=1,2,\cdots,n.
\end{equation}
Since $\vq\neq0$, the constant $c$ is then calculated by
\begin{equation}\label{constant:c}
  c 
  =\frac{\|Y^{\wq}\|_{p}^{p}}{\|Y^{\vq}\|_{q}^{q}}
  =\frac{\|\wq^{(\gamma)}\|_{2,p}^{p}}{\|\vq^{(\gamma)}\|_{2,q}^{q}}=\frac{1}{\|\vq^{(\gamma)}\|_{2,q}^{q}}.
\end{equation}
Substituting \eqref{constant:c} into \eqref{wi:cvi}, we have
\begin{equation*}
\begin{aligned}
  |Y_{i}^{\wq}|&=(c|Y_{i}^{\vq}|^{q})^{1/p}
 =\frac{|Y_{i}^{\vq}|^{q-1}}{\|Y^{\vq}\|_{q}^{q-1}}, i=1,2,\cdots,n,
\end{aligned}
\end{equation*}
i.e.
\begin{equation*}
  \|\wq_{i}\|_{2}=\frac{\|\vq_{i}\|_{2}^{q-1}}{\|\vq^{(\gamma)}\|_{2,q}^{q-1}}.
\end{equation*}
Together with equality \eqref{wri2},  we obtain the expression
\begin{equation*}
\begin{aligned}
  w_{i}^{(j)}&=\frac{\|\vq_{i}\|_{2}^{q-1}}{\|\vq^{(\gamma)}\|_{2,q}^{q-1}}\frac{1}{\|\vq_{i}\|_{2}}|v_{i}^{(j)}|{\rm sign}(v_{i}^{(j)})\\
  &=\frac{|\vq_{i}|^{q-1}}{\|\vq^{(\gamma)}\|_{2,q}^{q-1}}\frac{v_{i}^{(j)}}{|\vq_{i}|}.
  \end{aligned}
\end{equation*}
Connecting to the definitions in \eqref{signQ} and rewriting the equation into vector form,  we yields the desired result
\begin{equation*}
  \wq^{(\gamma)}=\frac{|\vq^{(\gamma)}|_Q^{q-1}}{\|\vq^{(\gamma)}\|_{2,q}^{q-1}}\circ{\rm signQ}(\vq^{(\gamma)}).
\end{equation*}
The proof is completed.
\end{proof}

\begin{lemma}\label{lemma-3}
  Let $\wq=[\wq_i],~\vq=[\vq_i]\in\H^{n}$ with  $\wq_i\neq0$ and $\vq_i\neq0$ for any $1\le i\le n$,  and  $0<p<1$.
 Then
\begin{equation*}
\|\wq\|_{p}^{p}\leq p(|\vq|^{p-1})^{T}|\wq|+(1-p)\|\vq\|_{p}^{p}
\end{equation*}
holds wherein the inequality becomes equality when $|\wq|=|\vq|$.
\end{lemma}
\begin{proof}
Recall that the  absolute value function is defined in $\eqref{e:absv}$.
For any quaternion vector $\wq$ with each entry  $\wq_i\neq0$,  $|\wq|$ is a real vector with positive entries. Let $f(|\wq|)=\||\wq|\|_p^p$, then $f(\cdot)$ is non-convex and differentiable at positive real vector  when $0<p<1$.  From the first-order convexity condition $\eqref{concave-function}$, we have
\begin{equation*}
f(|\wq|)\leq f(|\vq|)+\nabla f(|\vq|)^T(|\wq|-|\vq|),
\end{equation*}
and
\begin{equation*}
\begin{aligned}
\nabla f(|\vq|)=\nabla\||\vq|\|_p^p&=\frac{\partial\||\vq|\|_p^p}{\partial|\vq|}\Big|_{\wq=\vq}\\
&=p|\vq|^{p-1}\circ {\rm sign}(|\vq|).
\end{aligned}
\end{equation*}
Therefore,
\begin{equation*}
f(|\wq|)\leq\||\vq|\|_p^p+[p|\vq|^{p-1}\circ {\rm sign}(|\vq|)]^T|\wq|-p\||\vq|\|_p^p.
\end{equation*}
Thus, we obtain
$\||\wq|\|_{p}^{p}\leq p[|\vq|^{p-1}\circ {\rm sign}(|\vq|)]^{T}|\wq|+(1-p)\||\vq|\|_{p}^{p}.$
 Apparently, if $\wq^{(\gamma)}=\vq^{(\gamma)}$, it must satisfy the condition that $|\wq|=|\vq|.$
\end{proof}

Note that in our method, in order to meet the condition that $\wq_i\neq0$, if any element in the projection vector $(\wq^{(\gamma)})^{(k)}$ is zero, then we replace it with $(\wq^{(\gamma)})^{(k)}+\varepsilon$, ~where $\varepsilon$ is a random scalar that is sufficiently close to zero. 
In the rest of this section, we always assume that the mean of the training set is zero, that is, ${\bf\Psi}=0$; otherwise, we centralize the training samples by $\Fq_i=\Fq_i-{\bf\Psi} $, $i=1,\ldots, \ell$.

\subsubsection{Case 1: Convex constraint set}
Now, we proceed to the solution of G2DQPCA problem for the first case,~i.e.,~$p\geq1,$ based on Theorem $\ref{lemma-1}$ and Lemma $\ref{lemma-2}$.

 The quaternion optimization problem of G2DQPCA states
\begin{equation}\label{case-1}
   \mathop{{\rm max}}\limits_{\wq}\sum_{i=1}^{\ell}\|\Fq_{i}\wq\|_{s}^{s}, ~s.t.~ \|\wq\|_{p}^{p}=1,
\end{equation}
where $s\geq1,p\geq1,\wq\in\H^{n}$.  The constraint set is convex.
 According to definitions \eqref{real-counterpart} and \eqref{d:real-vec},  the quaternion optimization problem \eqref{case-1} can be transformed to its equivalent real counterpart,
\begin{equation}\label{real:case1}
    \mathop{{\rm max}}\limits_{\wq^{(\gamma)}}\sum_{i=1}^{\ell} \|\Fq_{i}^{(\gamma)}\wq^{(\gamma)}\|_{2,s}^{s}, ~s.t.~ \|\wq^{(\gamma)}\|_{p}^{p}=1.
\end{equation}
This real optimization problem can be turned into iteratively maximizing a surrogate function under
the MM framework, as shown below. Assume $(\wq^{(\gamma)})^{(k)}$ is the projection vector at the $k$-th step in the iteration procedure. It can be regarded as a constant vector that is irrelevant with respect
to $\wq^{(\gamma)}$.
Define
$$\hat{F_{i}}=|\Fq_{i}^{(\gamma)}(\wq^{(\gamma)})^{(k)}|_Q^{s-1}\circ{\rm signQ}(\Fq_{i}^{(\gamma)}(\wq^{(\gamma)})^{(k)}),$$
according to Theorem $\ref{lemma-1}$. The convex objective function
can be linearized as
\begin{equation*}
  \begin{aligned}
    &\sum\limits_{i=1}^{\ell}\|\Fq_{i}\wq\|_{s}^{s}=\sum\limits_{i=1}^{\ell}\|\Fq_{i}^{(\gamma)}\wq^{(\gamma)}\|_{2,s}^{s}\\
    &\geq s\sum\limits_{i=1}^{\ell}\hat{F_{i}}^{T}\Fq_{i}^{(\gamma)}\wq^{(\gamma)}+(1-s)\sum\limits_{i=1}^{\ell}\|\Fq_{i}^{(\gamma)}(\wq^{(\gamma)})^{(k)}\|_{2,s}^{s},
  \end{aligned}
\end{equation*}
wherein the inequality becomes equality when $\wq^{(\gamma)} = (\wq^{(\gamma)})^{(k)}$. Denote the objective function  by $f(\wq^{(\gamma)})$, and  the linearized function by $g(\wq^{(\gamma)}|(\wq^{(\gamma)})^{(k)})$.
That is
\begin{equation*}
  f(\wq^{(\gamma)})=\sum\limits_{i=1}^{\ell}\|\Fq_{i}^{(\gamma)}\wq^{(\gamma)}\|_{2,s}^{s},
\end{equation*}
and
\begin{equation}\label{gw|wk}
\begin{aligned}
  g(\wq^{(\gamma)}|(\wq^{(\gamma)})^{(k)})= s\sum\limits_{i=1}^{\ell}\hat{F_{i}}^{T}\Fq_{i}^{(\gamma)}\wq^{(\gamma)}\\
    +(1-s)\sum\limits_{i=1}^{\ell}\|\Fq_{i}^{(\gamma)}(\wq^{(\gamma)})^{(k)}\|_{2,s}^{s}.
  \end{aligned}
\end{equation}
According to $\eqref{simple_vr}$ and by simple algebra, it is easy to verify that
$$\hat{F_{i}}^{T}\Fq_{i}^{(\gamma)}(\wq^{(\gamma)})^{(k)}
=\|\Fq_{i}^{(\gamma)}(\wq^{(\gamma)})^{(k)}\|_{2,s}^{s}.$$
Then we have $f((\wq^{(\gamma)})^{(k)})= g((\wq^{(\gamma)})^{(k)}|(\wq^{(\gamma)})^{(k)})$
 and $f(\wq^{(\gamma)}) \geq g(\wq^{(\gamma)}|(\wq^{(\gamma)})^{(k)})$ for all $\wq^{(\gamma)}$, satisfying the two key conditions of the minorization-maximization (MM) framework \cite{dk04}.
Therefore, $g(\wq^{(\gamma)}|(\wq^{(\gamma)})^{(k)})$ is a feasible surrogate function of $f(\wq^{(\gamma)})$.
According to the MM framework, the optimization problem in \eqref{case-1} can be turned into iteratively maximizing the surrogate
function as follows
\begin{equation*}
\left.
  \begin{array}{l}
  (\wq^{(\gamma)})^{(k+1)}=\mathop{\rm arg~max}\limits_{\wq^{(\gamma)}}g(\wq^{(\gamma)}|(\wq^{(\gamma)})^{(k)}),\\
  s.t.~\|\wq^{(\gamma)}\|_{2,p}^{p}=1.
  \end{array}
  \right.
\end{equation*}
Define
\begin{equation}\label{vk}
  (\vq^{(\gamma)})^{(k)} = \sum\limits_{i=1}^{\ell}(\Fq_{i}^{(\gamma)})^{T}\hat{F_{i}},
\end{equation}
by dropping the term irrelevant to $\wq^{(\gamma)}$ in the surrogate function $\eqref{gw|wk}$,
maximizing the surrogate function leads to a linear optimization problem with $L_{p}$-norm constraint
\begin{equation}\label{linearcase1}
\begin{array}{l}
\begin{aligned}
  (\wq^{(\gamma)})^{(k+1)}
  =\mathop{\rm arg~max}\limits_{\wq^{(\gamma)}}[(\vq^{(\gamma)})^{(k)}]^{T}\wq^{(\gamma)},
  \end{aligned} \\
   s.t.~\|\wq^{(\gamma)}\|_{2,p}^{p}=1.
\end{array}
\end{equation}
According to Lemma $\ref{lemma-2}$, the solution of this problem is
\begin{equation}\label{case1-solution}
  (\wq^{(\gamma)})^{(k+1)}=\frac{|(\vq^{(\gamma)})^{(k)}|_Q^{q-1}}{\|(\vq^{(\gamma)})^{(k)}\|_{2,q}^{q-1}}\circ{\rm signQ}((\vq^{(\gamma)})^{(k)}),
\end{equation}
where $q$ satisfies $\frac{1}{p} + \frac{1}{q} = 1$.
The solution is rewritten in a two-step procedure as
\begin{equation}\label{2step}
\begin{aligned}
  &(\uq^{(\gamma)})^{(k)}=|(\vq^{(\gamma)})^{(k)}|_Q^{q-1}\circ{\rm signQ}((\vq^{(\gamma)})^{(k)}),\\
  &(\wq^{(\gamma)})^{(k+1)}=\frac{(\uq^{(\gamma)})^{(k)}}{\|(\uq^{(\gamma)})^{(k)}\|_{2,p}}.
\end{aligned}
\end{equation}

Two extreme conditions of case 1,~i.e., $p=1$ and $p=\infty$ are discussed as follows.~When $p=1$,~we will have $q=\infty$ according to the relation $1/p+1/q=1$. Then the denominator of  $\eqref{case1-solution}$ becomes $\|(\vq^{(\gamma)})^{(k)}\|_{2,\infty}$.~Let $j=\mathop{\rm arg~max}\limits_{i=1,\cdots,n}|\vq^{(k)}_{i}|$,~i.e.,~$|\vq^{(k)} _{j}|$ is the largest value in $|\vq^{(k)}|$.~By taking the limit of $\eqref{case1-solution}$ we have
\begin{equation}\label{p=1}
  \wq_{i}^{(k+1)}=\left\{\begin{array}{lc}
                         {\rm sign}(\vq_{i}^{(k)}), & i=j, \\
                         0, & i\neq j,
                       \end{array}\right.
\end{equation}
for $i=1,2,\cdots,n$.~Similarly, when $p$ approaches infinity, the limit of $\eqref{case1-solution}$ is
\begin{equation}\label{p=inf}
  (\wq^{(\gamma)})^{(k+1)}={\rm signQ}((\vq^{(\gamma)})^{(k)}).
\end{equation}
\subsubsection{Case 2: Non-convex constraint set}
When  $0 < p < 1$, the constraint set becomes non-convex.  With introducing an  auxiliary  variable $Y^{\wq}\equiv|\wq|$,  the  $L_{p}$-norm
\begin{equation}\label{YW}
  \|\wq\|_{p}^{p}=\|Y^{\wq}\|_{p}^{p},
\end{equation}
and clearly $Y^{\wq}\in\mathbb{R}^{n}$,$Y^{\wq}\geq0$.

Here we try to apply the method of Lagrange multipliers, considering that the constraint set is non-convex and non-Lipschitz continuous. Maximizing
the optimization problem of function $\eqref{e:g2dpcaQ}$  equals to maximizing the
Lagrangian as follows
\begin{equation}\label{Larg}
  \mathop{\rm max}\limits_{\wq}\sum\limits_{i=1}^{\ell}\|\Fq_{i}\wq\|_{s}^{s}-\lambda(\|\wq\|_{p}^{p}-1),
\end{equation}
where $s \geq1$, $0 < p < 1$, $\lambda > 0$, $\wq\in\H^{n}$.

Same as in Case 1, we firstly translate this function according to $\eqref{real:case1}$ and $\eqref{YW}$ into
\begin{equation*}
  \mathop{\rm max}\limits_{\wq^{(\gamma)}}\sum\limits_{i=1}^{\ell}\|\Fq_{i}^{(\gamma)}\wq^{(\gamma)}\|_{2,s}^{s}-\lambda(\|Y^{\wq}\|_{p}^{p}-1),
\end{equation*}
where $\|Y^{\wq}\|_{p}^{p}=\|\wq^{(\gamma)}\|_{2,p}^{p}$ is
essentially a function related to $\wq^{(\gamma)}$.
Again, the problem is transformed into iteratively maximizing a surrogate function
under the MM framework.
Assume that ~$(\wq^{(\gamma)})^{(k)}$ ~is the projection vector at the $k$-th step in the iteration
procedure. If any element in $(\wq^{(\gamma)})^{(k)}$ is zero, then we replace it with $(\wq^{(\gamma)})^{(k)}+\varepsilon$ to make sure that it has no zero elements, where $\varepsilon$ is a random scalar that is sufficiently close to zero. According to Theorem $\ref{lemma-1}$ and Lemma $\ref{lemma-3}$, we have
\begin{equation*}\label{case2-ineq}
\begin{aligned}
  \sum\limits_{i=1}^{\ell}\|\Fq_{i}^{(\gamma)}\wq^{(\gamma)}\|_{2,s}^{s}-\lambda(\|Y^{\wq}\|_{p}^{p}-1)
  \geq s((\vq^{(\gamma)})^{(k)})^{T}\wq^{(\gamma)}\\
  +(1-s)\sum\limits_{i=1}^{\ell}\|\Fq_{i}^{(\gamma)}(\wq^{(\gamma)})^{(k)}\|_{2,s}^{s}-\lambda(1-p)\||\wq^{(k)}|\|_{p}^{p}\\
  -\lambda p(|\wq^{(k)}|^{p-1}\circ {\rm sign}(|\wq^{(k)}|))^{T}Y^{\wq}+\lambda,
\end{aligned}
\end{equation*}
where $(\vq^{(\gamma)})^{(k)}$ is defined in $\eqref{vk}$,~$$|\wq^{(k)}|=\left[
                                                                \begin{array}{cccc}
                                                                  |\wq_{1}^{(k)}| & |\wq_{2}^{(k)}| & \cdots & |\wq_{n}^{(k)}| \\
                                                                \end{array}
                                                              \right]^{T},$$
 and this inequality becomes equality when $\wq^{(\gamma)}=(\wq^{(\gamma)})^{(k)}$.
~Denote the left-hand side of the inequality by $f(\wq^{(\gamma)})$ and  the right-hand side  by $g(\wq^{(\gamma)}|(\wq^{(\gamma)})^{(k)})$, ~i.e.,
\begin{equation}\label{fw_case2}
  f(\wq^{(\gamma)})=\sum\limits_{i=1}^{\ell}\|\Fq_{i}^{(\gamma)}\wq^{(\gamma)}\|_{2,s}^{s}-\lambda(\|Y^{\wq}\|_{p}^{p}-1),
\end{equation}
and
\begin{equation*}\label{gw|wk-case2}
  \begin{aligned}
&    g(\wq^{(\gamma)}|(\wq^{(\gamma)})^{(k)})= s((\vq^{(\gamma)})^{(k)})^{T}\wq^{(\gamma)}\\
 &   +(1-s)\sum\limits_{i=1}^{\ell}\|\Fq_{i}^{(\gamma)}(\wq^{(\gamma)})^{(k)}\|_{2,s}^{s}-\lambda(1-p)\||\wq^{(k)}|\|_{p}^{p}\\
&  -\lambda p(|\wq^{(k)}|^{p-1}\circ {\rm sign}(|\wq^{(k)}|))^{T}Y^{\wq} +\lambda.
  \end{aligned}
\end{equation*}
By the simple algebraic calculation,~we obtain
$$f((\wq^{(\gamma)})^{(k)})= g((\wq^{(\gamma)})^{(k)}|(\wq^{(\gamma)})^{(k)})$$
 and
 $$f(\wq^{(\gamma)}) \geq g(\wq^{(\gamma)}|(\wq^{(\gamma)})^{(k)})$$
  for all $\wq^{(\gamma)}$,  which satisfy the two key conditions of the MM framwork. Therefore, $g(\wq^{(\gamma)}|(\wq^{(\gamma)})^{(k)})$ is a feasible surrogate function of $f(\wq^{(\gamma)})$.
According to the MM framework, the optimization problem in $\eqref{Larg}$ can be turned into iteratively maximizing the surrogate
function as follows
\begin{equation*}
  (\wq^{(\gamma)})^{(k+1)}=\mathop{\rm arg~max}\limits_{\wq^{(\gamma)}}g(\wq^{(\gamma)}|(\wq^{(\gamma)})^{(k)}).
\end{equation*}
Then the following quadratic optimization problem will be reached after ignoring irrelevant terms of $\wq^{(\gamma)}$
\begin{equation}\label{wk+1-case2}
\begin{aligned}
  (\wq^{(\gamma)})^{(k+1)}&=\mathop{\rm arg~max}\limits_{\wq^{(\gamma)}}s((\vq^{(\gamma)})^{(k)})^{T}\wq^{(\gamma)}\\
  &-\lambda p(|\wq^{(k)}|^{p-1}\circ {\rm sign}(|\wq^{(k)}|))^{T}Y^{\wq}.
\end{aligned}
\end{equation}
Let
$
 h(\wq^{(\gamma)})= s((\vq^{(\gamma)})^{(k)})^{T}\wq^{(\gamma)}-\lambda p(\hat{\wq}^{(k)})^{T}Y^{\wq},
$
where $\hat{\wq}^{(k)}=|\wq^{(k)}|^{p-1}\circ {\rm sign}(|\wq^{(k)}|)$ and $Y^{\wq}$ is defined in $\eqref{YW}$.~Now we consider the partial derivative with respect to $\wq^{(\gamma)}$
\begin{equation}\label{Div-wr}
\begin{aligned}
  \left[\frac{\partial h(\wq^{(\gamma)})}{\partial \wq^{(\gamma)}}\right]_{l}&=\frac{\partial h(\wq^{(\gamma)})}{\partial w_{i}^{(j)}}\\
  &=s(\vq^{(k)})_{i}^{(j)}-\lambda p\left[|\wq^{(k)}_{i}|^{p-1}\frac{w_{i}^{(j)}}{|\wq_{i}|}\right],
\end{aligned}
\end{equation}
where $i=1,2,\cdots,n$,~$j=0,1,2,3$ and $l=1,2,\cdots,4n$.\\
Let the above equation equal to zero and we get
\begin{equation*}
  w_{i}^{(j)}=\frac{s}{\lambda p}(\vq^{(k)})_{i}^{(j)}|\wq_{i}||\wq^{(k)}_{i}|^{1-p}.
\end{equation*}
Rewrite the component as vector expression which is the solution of the problem $\eqref{wk+1-case2}$
\begin{equation*}
  (\wq^{(\gamma)})^{(k+1)}=\frac{s}{\lambda p}(\vq^{(\gamma)})^{(k)}\circ \tilde{\wq}^{(\gamma)},
\end{equation*}
where $\tilde{\wq}^{(\gamma)}= |\wq^{(\gamma)}|_Q\circ |(\wq^{(\gamma)})^{(k)}|_Q^{1-p}.$
Considering the constraint $\|\wq\|_{p}^{p}=1$,~i.e.~$\|\wq^{(\gamma)}\|_{2,p}^{p}=1$ and $\lambda>0$, ~we have
\begin{equation*}
  \lambda=\frac{s}{p}\|(\vq^{(\gamma)})^{(k)}\circ \tilde{\wq}^{(\gamma)}\|_{2,p}.
\end{equation*}
Then the update rule is
\begin{equation}\label{case2-solution}
  (\wq^{(\gamma)})^{(k+1)}=\frac{(\vq^{(\gamma)})^{(k)}\circ \tilde{\wq}^{(\gamma)}}{\|(\vq^{(\gamma)})^{(k)}\circ \tilde{\wq}^{(\gamma)}\|_{2,p}}.
\end{equation}
The above solution equals to the two-step procedure as below
\begin{align}
  (\uq^{(\gamma)})^{(k)}&= (\vq^{(\gamma)})^{(k)}\circ \tilde{\wq}^{(\gamma)},\\
  (\wq^{(\gamma)})^{(k+1)}&=\frac{(\uq^{(\gamma)})^{(k)}}{\|(\uq^{(\gamma)})^{(k)}\|_{2,p}}.
\end{align}
This completes the solution in Case 2.

\bigskip
Now we have obtained  the solution of the quaternion optimization problem of G2DQPCA.
From the results in $\eqref{case1-solution}$ and $\eqref{case2-solution}$, we observe
that a closed-form solution is obtained in each iteration for both cases.

\begin{remark}
G2DQPCA is a generalization of 2DQPCA  with  applying $L_{p}$-norm both in the objective function and the constraint function. Let $s=p=2$ then G2DQPCA reduces to 2DQPCA \cite{jlz17:2DQPCA,  zjg19:im-2dqpca},
  \begin{equation}\label{o:2dpcaQ}
  \begin{aligned}
  &\mathop{{\rm arg~max}}\limits_{\wq_{1},\cdots,\wq_{r}\in\Q^n}\sum\limits_{j=1}^{r}\sum\limits_{i=1}^{\ell}\|\Fq_{i}\wq_{j}\|_{2}^{2},\\
  &s.t.~\left\{
         \begin{array}{ll}
           \wq_{j}^{*}\wq_{i}=\|\wq_{j}\|_{2}^{2}=1 & (i=j),\\
           \wq_{j}^{*}\wq_{i}=0 & (i\neq j).
         \end{array}
       \right.
\end{aligned}
\end{equation}
Indeed, let $\Wq=[\wq_1,\cdots,\wq_r]$ then
\begin{equation*}\label{o:2dpcaQequivalent}
  \begin{aligned}
  &\mathop{{\rm arg~max}}\limits_{\wq_{1},\cdots,\wq_{r}\in\Q^n}\sum\limits_{j=1}^{r}\sum\limits_{i=1}^{\ell}\|\Fq_{i}\wq_{j}\|_{2}^{2}=\mathop{{\rm arg~max}}\limits_{\Wq\in\Q^{n\times k}}\sum\limits_{i=1}^{\ell}\|\Fq_{i}\Wq\|_{F}^{2}\\
  &
  =\mathop{{\rm arg~min}}\limits_{\Wq\in\Q^{n\times k}}\sum\limits_{i=1}^{\ell}\|\Fq_{i}(I-\Wq\Wq^*)\|_{F}^{2}.
\end{aligned}
\end{equation*}
The  ridge regression model \cite{xz19:2DPCA-S} of 2DQPCA  is equivalent to the quaternion optimization model \eqref{o:2dpcaQ}.
Notice that  the Frobenius norm of  a quaternion matrix $\Aq$ is defined by $\|\Aq\|_F^2={\rm trace}(\Aq^*\Aq)={\rm trace}(\Aq\Aq^*)$.

\end{remark}
\begin{remark}
G2DQPCA generalizes  G2DPCA from the real field to the quaternion skew-field.
Compared with G2DPCA , we firstly constrain the orthogonality of the projection vectors,  $\wq_{i},i = 1,2,\cdots,r$, in the ridge regression model \eqref{e:g2dpcaQ}. We will elaborate this constraint later.

\end{remark}

\subsection{A new quaternion optimization algorithm with deflation}
Now assume we have obtained the first $r$ projection vectors,~i.e.,~$\Wq=[\wq_{1},\wq_{2},\cdots,\wq_{r}]$, where $1\leq r<n$.
The $(r+1)$-th projection vector $\wq_{r+1}$ can be calculated similarly on the deflated samples
\begin{equation}\label{deflated}
  \Fq^{\textit{deflated}}_{i}=\Fq_{i}(\Iq-\Wq\Wq^{*}) , i= 1,2,\cdots,\ell.
\end{equation}
What is particularly noteworthy is that the projection vector $\wq_{r+1}$ obtained at each iteration must be orthonormalized against all previous $\wq_{i},~i=1,2,\cdots,r$ ~by a standard Gram-Schmidt procedure in quaternion domain.
This is because when we completed deflation in the $r$-th direction that means there is no information left in this direction and then projection on the deflated samples should be zero.
We observe that after $r$ steps each sample is transformed into \eqref{deflated}. After deflating samples by $r$ directions, we obtain that
\begin{equation}\label{orthonormal}
  \Fq_{i}^{\textit{deflated}}\wq_j=\Fq_{i}(\Iq-\Wq\Wq^{*})\wq_{j},
\end{equation}
where $j=1,2,\cdots,r+1.$
$\Fq_{i}^{\textit{deflated}}\wq_j$ should be zero if $j=1,2,\cdots,r$. Otherwise the feature information can be lost because of the interference from other direction. From \eqref{orthonormal} with $j=r+1$, we also know that the $(r+1)$-th projector must not be linearly represented by the prior projectors, or in other words, the newly computed projector is not in the subspace generated by the known projectors, so that we orthogonalize the computed projector to the known ones.

\begin{algorithm}[H]
\caption{G2DQPCA: the generalized two-dimensional quaternion principal component analysis}
\label{G2DQPCA}
\begin{algorithmic}[1]
\REQUIRE {Training samples ~$\Fq_1, \Fq_2,\cdots, \Fq_{\ell}$, the number of selected feature $r$, and parameters $s\in [1,\infty), p\in(0,\infty]$. }
\ENSURE Optimal quaternion projection matrix $\Wq$, and weighted coefficient vector $\Dq $
\STATE Initialize $\Wq=[ \ ], \Dq=[ \ ], \Fq^{0}_{i}=\Fq_{i}.$
\FOR{$t=1,2,\cdots,r$}
\STATE Initialize $k=0,\delta=1$, arbitrary $\wq^{(0)}$ with $\parallel\wq^{(0)}\parallel_p=1$.
\STATE $f^{(0)}=\sum\limits^{\ell}_{i=1}\|\Fq_{i}\wq^{(0)}\|_s^s.$
\WHILE{$\delta >10^{-4}$}
\STATE $\vq^{(k)}=\sum\limits^{\ell}_{i=1}\Fq_{i}^{*}[|\Fq_{i}\wq^{(k)}|^{s-1}\circledcirc {\rm sign}(\Fq_i\wq^{(k)})]$.
\IF{$0<p<1$}
\STATE $\uq^{(k)}=|\wq^{(0)}|\circledcirc|\wq^{(k)}|^{1-p}\circledcirc\vq^{(k)},$
\STATE $\wq^{(k+1)}=\frac{\uq^{(k)}}{\parallel\uq^{(k)}\parallel_p}.$
\ELSIF{$p=1$}
\STATE $j=\mathop{\rm arg~max}\limits_{1\le i\le n}|\vq_{i}^{(k)}|,$
\STATE $ \wq_{i}^{(k+1)}=\left\{
\begin{aligned}
{\rm sign}(\vq_{j}^{(k)}), \ i=j,\\
0, \ i\neq j.\\
\end{aligned}
\right.$
\ELSIF{$1<p<\infty$}
\STATE $q=p/(p-1),$
\STATE $\uq^{(k)}=|\vq^{(k)}|^{q-1}\circledcirc{\rm sign}(\vq^{(k)}),$
\STATE $\wq^{(k+1)}=\frac{\uq^{(k)}}{\parallel\uq^{(k)}\parallel_p}.$
\ELSIF{$p=\infty$}
\STATE $\wq^{(k+1)}={\rm sign}(\vq^{(k)}).$
\ENDIF
\STATE $f^{(k+1)}=\sum\limits^{\ell}_{i=1}\|\Fq_{i}\wq^{(k+1)}\|_{s}^{s}.$
\STATE $\delta=|f^{(k+1)}-f^{(k)}|/|f^{(k)}|.$
\STATE $k\leftarrow k+1.$
\ENDWHILE
\STATE $\wq_{t}=\wq^{(k)}.$
\STATE Orthogonalize $\wq_{t}$ with the previous vector $\wq_{i}$ by quaternion QR factorization.
\STATE $f^{(t)}=\sum\limits^{\ell}_{i=1}\|\Fq_{i}\wq_{t}\|_{s}^{s}.$
\STATE $\Wq\leftarrow [\Wq,\wq_{t}].$
\STATE $\Dq\leftarrow [\Dq,f^{(t)}].$
\STATE $\Fq_i=\Fq_i^0(\Iq-\Wq\Wq^{*}), i=1,2,\cdots,\ell.$
\ENDFOR
\end{algorithmic}
\end{algorithm}
\begin{figure}[htbp]
\centering
 \includegraphics[width=0.45\textwidth,height=1.0\textwidth]{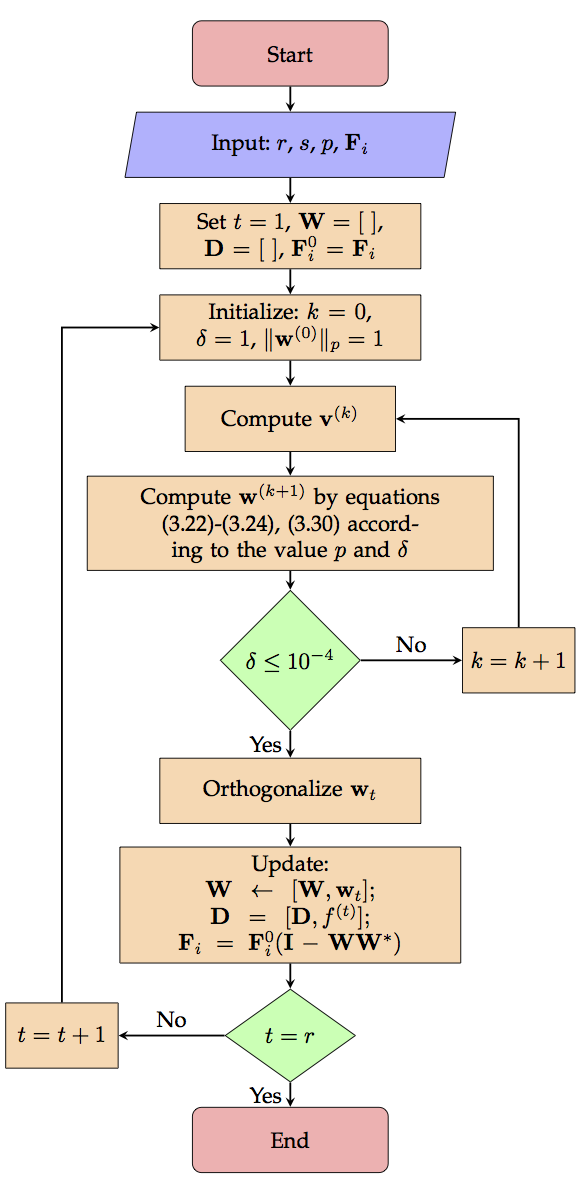}
\caption{Flowchart of Algorithm \ref{G2DQPCA}}\label{flowchart}
\end{figure}

Another notable feature here is the usage of deflation in general. It should not
be used to compute more than a few projection vectors because of the fact that the simple matrix $F_{i}$ will accumulate errors from all previous computations and this can be disastrous
if the cases where cancellations are so severe in the orthogonalization steps. In our experiments, we use quaternion QR Factorization\cite{jwzc18} to complete such orthogonalizing process.

We summarize the above steps into Algorithm \ref{G2DQPCA}. Here the notation $\circledcirc$ is defined by multiplying the corresponding real coefficients between two quaternions.~For example, assume $\wq=w_{0}+w_{1}\iq+w_{2}\jq+w_{3}\kq$,~$\vq=v_{0}+v_{1}\iq+v_{2}\jq+v_{3}\kq$,~then
\begin{equation*}
  \wq\circledcirc\vq=(w_{0}\circ v_{0})+(w_{1}\circ v_{1})\iq+(w_{2}\circ v_{2})\jq+(w_{3}\circ v_{3})\kq.
\end{equation*}
The flowchart of Algorithm \ref{G2DQPCA} is shown in Figure \ref{flowchart}.

\section{Color Image Analysis with Weighted Projection}\label{s:application}
In this section, we present the weighted G2DQPCA algorithm for color image recognition and the G2DQPCA algorithm for  the low-rank reconstructions of a group of color images.

With applying  Algorithm \ref{G2DQPCA} to the training data in color image recognition,  we obtain $r$ pairs of optimal values and projection vectors, denoted by $(f^{(1)}, \wq_1),\ldots, (f^{(r)}, \wq_r)$.
The contributions of $\wq_1, \ldots, \wq_r$ to increase the recognition rate can be characterized by $f^{(1)},\ldots,f^{(r)}$.
Let  $t=1,\ldots,r$.  It is not difficult to find out that when $s=p=2$, the value of $f^{(t)}$ is equal to the $t$-th eigenvalue of the covariance matrix of training set,  which represents the variance  on the direction $\wq_t$.
In practical implementation, we normalize  $f^{(t)}$  by $f^{(t)}/\sum_{t=1}^r f^{(t)}$  and  weight each projection vector $\wq_t$ by multiplying the normalized   $f^{(t)}$.

A new weighted G2DQPCA algorithm  can be proposed for color image recognition with using  the weighted projection vectors  $\wq_1f^{(1)}$, $\ldots$, $\wq_rf^{(r)}$. 
Let $\Wq=[\wq_1f^{(1)}$, $\cdots$, $\wq_rf^{(r)}]$ and let  the eigenface subspace be  generated by the columns of $\Wq$.
Then the weighted  projections of $\ell$ training face images on the eigenface subspace $\Wq$ are computed by
\begin{equation}\label{e:ps4fs}
\Pq_i=(\Fq_i-{\bf{\Psi})}\Wq\in\H^{m\times r},\ i=1,\cdots,\ell,
\end{equation}
where  ${\bf{\Psi}}$ is the average image of all training samples.
Such  $\Pq_i$ is called the {\it feature matrix} or {\it feature image} of the sample image $\Fq_i$
and its $j$-th column,  i.e., $(\Fq_i-{\bf{\Psi}})\wq_j$,  is called the {\it $j$-th principal component}.
 In Algorithm $\ref{WG2DQPCQ}$, we propose the procedure of G2DQPCA with  weighted projection for color image recognition.
 \begin{algorithm}[H]
 \caption{\bf G2DQPCA with Weighted Projection for Color  Image Recognition}
 \label{WG2DQPCQ}
   \begin{algorithmic}[1]
\STATE
 For the mean-centered given training samples $\Fq_{i}-{\bf\Psi},~i=1,2,\cdots,\ell$, compute the $r$ $(1\le r< n)$ projection vectors and their corresponding weighted coefficient by Algorithm $\ref{G2DQPCA}$,  denoted by $(\wq_1, f^{(1)}),$ $\ldots,$ $(\wq_r, f^{(r)})$.
\STATE Let the eigenface subspace
be spanned by the columns of  $\Wq=[f^{(1)}\wq_1,$ $\ldots,$ $f^{(r)}\wq_r]$.

\STATE Compute the projections $\Pq_1,\cdots,\Pq_r$, defined by \eqref{e:ps4fs}, of $\ell$ training color  images.
\STATE For a given testing sample $\Fq$,  compute its feature matrix, $\Pq=(\Fq-{\bf\Psi})\Wq$.
 Seek the nearest face image whose feature matrix $\Pq_{i^\#}$ $(1\le i^\#\le \ell)$ satisfies that
$i^\#=\mathop{\rm arg~min}\limits_{1\le i\le \ell}\|\Pq_i-\Pq\|$.  
Then $\Fq_{i^\#}$ is output as the person to be recognized.
   \end{algorithmic}
 \end{algorithm}

We close this section by applying G2DQPCA to compute the low-rank reconstructions of  color images. 
Let the eigenface subspace be spanned  by the columns of
 $\Wq=[\wq_1,$ $\ldots,$ $\wq_r]$,  where $\wq_{j}$ is computed by Algorithm $\ref{G2DQPCA}$. 
Then the  reconstructions of training samples are computed by
\begin{equation}\label{re-image}
  \Fq_{i}^{rec}=(\Fq_{i}-{\bf\Psi})\Wq\Wq^{*}+{\bf\Psi},~i=1,\cdots,\ell.
\end{equation}
That is, we only use the first $r$ projection vectors to reconstruct the original image.
One important thing to note here is that the reconstruction process does not need to magnify the effect of features, so the projection matrix $\Wq$ is unweighted.

One can see that $\Fq_{i}^{rec}-{\bf\Psi}$ is of low-rank  since  the rank $\Wq$  is $r$  and $r$ is always a small integer.  In fact, we have present a new method to simultaneously  compute the low-rank approximations to  $\Fq_{i}-{\bf\Psi}$, $i=1,\cdots,\ell$.  
The reconstruction of testing samples can be  computed in the similar way.
The whole  process is proposed in Algorithm $\ref{alg:reco}$.
\begin{algorithm}[H]
 \caption{\bf G2DQPCA for Image Reconstruction}
 \label{alg:reco}
   \begin{algorithmic}[1]
\STATE
For the mean-centered given training samples $\Fq_{i}-{\bf\Psi},~i=1,2,\cdots,\ell$, compute the $r$ $(1\le r< n)$ projection vectors by Algorithm $\ref{G2DQPCA}$, denoted by $\wq_1, \wq_2 \cdots,\wq_{r}$.
\STATE Let the eigenface subspace 
be spanned by the columns of $\Wq=[\wq_1,$ $\ldots,$ $\wq_r]$.
\STATE Compute the reconstructions $\Fq_{1}^{rec},\cdots, \Fq_{r}^{rec}$ of training samples by \eqref{re-image}.
\STATE  For a given testing sample $\Fq$, compute its reconstruction  by  
$
  \Fq^{rec}=(\Fq-{\bf\Psi})\Wq\Wq^{*}+{\bf\Psi}.
$
   \end{algorithmic}
 \end{algorithm}

\section{Experiments}\label{s:experiments}
\noindent

\begin{figure*}[!t]
    \setlength{\abovecaptionskip}{0.1cm}
  \setlength{\belowcaptionskip}{0.1cm}
  \centering
    \includegraphics[width=0.99\textwidth,height=0.4\textwidth]{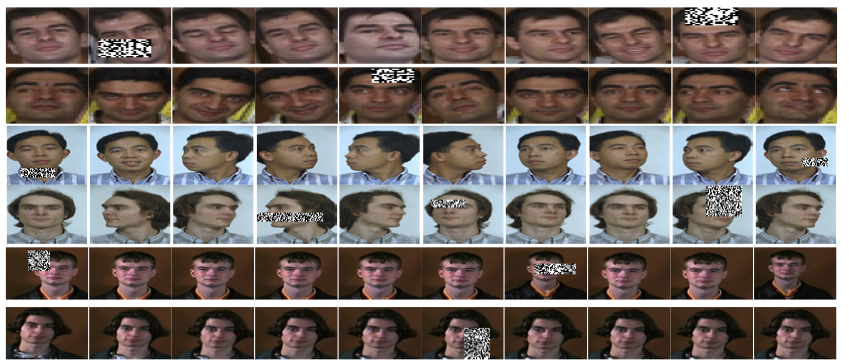}
\caption{Some samples with or without occlusion of one person from the GTFD, Color FERET and  Faces95 databases.}
\label{fig:samples}
\end{figure*}
In this section, we  evaluate the proposed G2DQPCA and WG2DQPCA models on color face recognition,  test the efficiency of G2DQPCA on color image reconstruction and compare WG2DQPCA with four well-known deep learning methods. 

The evaluation and comparison are implemented  on  three standard  face databases:
\begin{itemize}
\item GTFD---The Georgia Tech Face Database\footnote{The Georgia Tech face database: http://www.anefian.com /research / face\_reco.htm.} 
 contains 750 images from 50 subjects, fifth images per subject.
 All the images are manually  cropped, and then resized to $44 \times 33$ pixels.
\item Color FERET---The color FERET database\footnote{The color FERET database: https://www.nist.gov/itl/iad/image-group / color-feret-database.} 
 contains $1199$ persons, $14126$ color face images,  and each person has various numbers of face images with various backgrounds.
 The size of each cropped color face image is $192\times 128$ pixels. 
Here we take  $275$ individuals and $11$ images of each person for testing.
\item Faces95---Faces95 database\footnote{
The Faces95 database:  https://cswww.essex.ac.uk/mv /allfaces/ faces95.html.} 
contains $1440$ images photographed over a uniform background from $72$ subjects, $20$ images per subject. The size of each color face image is $200\times 180$ pixels.
\end{itemize}
 Some samples are listed in  Figure \ref{fig:samples}.

Notice that  the performance of 2DQPCA has been indicated to be better than  2DPCA in \cite{jlz17:2DQPCA,zjg19:im-2dqpca} and that  2DQPCA proposed in \cite{jlz17:2DQPCA,  zjg19:im-2dqpca,xz19:2DPCA-S} are  special cases of G2DQPCA with $s=p=2$. It is only necessary to concentrate on the comparison of G2DQPCA-based approaches with different choices of $s$ and $p$ and the efficiency of utilizing the weighting technique.
All numerical experiments are performed with MATLAB-R2016 on a personal computer with Intel(R) Xeon(R) CPU E5-2630 v3 @  2.4GHz (dual processor) and RAM 32GB.

\begin{example}[Color face recognition]\label{ex:1}
\begin{figure*}[!t]
  \setlength{\abovecaptionskip}{0.1cm}
  \setlength{\belowcaptionskip}{0.1cm}
  \centering
  \includegraphics[width=0.99\textwidth,height=0.4\textwidth]{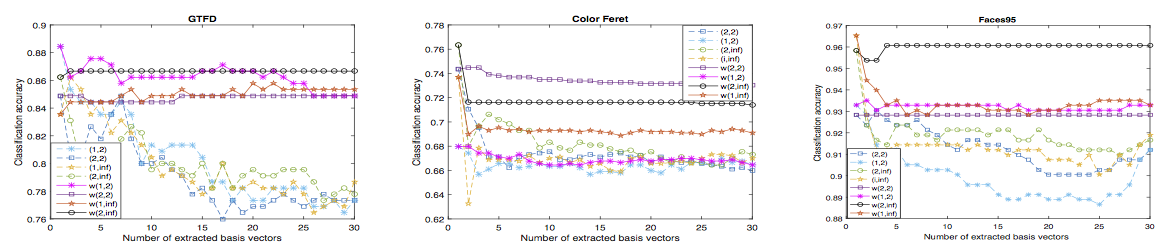}
\caption{Classification accuracies of G2DQPCA and WG2DQPCA on the GTFD, Color FERET and Faces95 databases.}
\label{fig:acc}
\end{figure*}

In this example, we  proceed to investigate the classification performance of G2DQPCA and WG2DQPCA on all three databases with clean training data. Randomly select 90 percent of data set as training set and the remaining as the testing set. The training set is guaranteed to contain at least one image of each individual.

The whole procedure is repeated three times and the average recognition rate is reported.
For simplicity, let $(s,p)$ denote G2DQPCA with $L_s$-norm and $L_p$-norm, and $w(s,p)$ WG2DQPCA with $L_s$-norm and $L_p$-norm in all figures.
For instance, $(2,2)$ represents G2DQPCA with $s=p=2$, which is exactly the version of 2DQPCA proposed in \cite{jlz17:2DQPCA,  zjg19:im-2dqpca,xz19:2DPCA-S};   $(1, 2)$, $(1, inf)$ and $(2, inf)$  denote  other new  versions of 2DQPCA, which are gathered in G2DQPCA. Note that $inf$ means $L_\infty$-norm.
  In Figure $\ref{fig:acc}$,  we present the classification accuracies of G2DQPCA and WG2DQPCA methods applied on three face databases with the number of features increases from $1$ to $30$.

From the numerical results, one can see that the classification accuracies of WG2DQPCA are much higher than those of G2DQPCA. The recognition rate of G2DQPCA decreases as the number of features increases, but the recognition rate of WG2DQPCA remains unchanged or perturbs tightly.
 The optimal parameter pair $(s,p)$ replies on the databases.
For the GTFD, Color FERET, and Faces95 databases, the optimal recognition rates are reached when  $w(s=1,p=2)$, $w(s=2,p=2)$ and $w(s=2,p=\infty)$, respectively.
This performance verifies that the G2DQPCA-based approaches with different $L_p$ norms extract quite distant features of color images.

Moreover,   2DQPCA proposed in \cite{jlz17:2DQPCA,  zjg19:im-2dqpca,xz19:2DPCA-S},  indicated by $(2,2)$ in Figure $\ref{fig:acc}$, performs not better than  G2DQPCA with $s=2,p=\infty$ indicated by $(2,{\rm inf})$  in most of cases.  The classification accuracy of  such 2DQPCA   is also greatly less than  its  weighted version, indicated by $w(2,2)$.  That indicates that the proposed  weighted G2DQPCA algorithm performs much better than the state-of-the-art 2DQPCA-based approaches in color face  recognition.

\end{example}

\begin{example}[Color face recognition with noisy training data]
In this example, we examine the recognition performance of G2DQPCA and WG2DQPCA with the training set  polluted by different kinds and levels of noise,  while the testing set are clean. On the three databases, two different noises are considered separately. In the first case, the noise consists of random black and white dots. We randomly added this noise to $20\%$ of the training set. The location of noise is arbitrary, and the size is at least $10\times 10$.  The second case is to add salt and pepper noise to the training set. Here we set the noise density as 0.02, 0.05 and 0.1, respectively.

\begin{figure*}[!t]
  \setlength{\abovecaptionskip}{0.1cm}
  \setlength{\belowcaptionskip}{0.1cm}
  \centering
  \includegraphics[width=0.99\textwidth,height=0.36\textwidth]{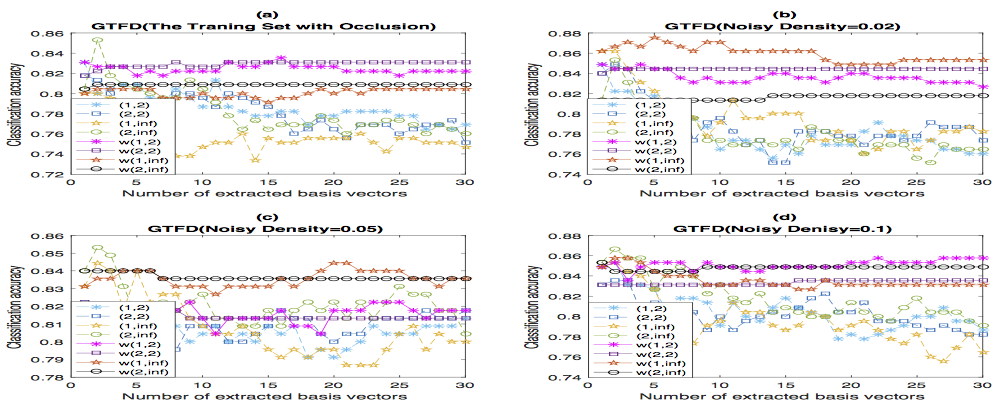}
\caption{Classification accuracies with polluted training set of G2DQPCA and WG2DQPCA on the GTFD database.
}
\label{acc:gtfd_noisytrain}
\end{figure*}

\begin{figure*}[!t]
  \setlength{\abovecaptionskip}{0.1cm}
  \setlength{\belowcaptionskip}{0.1cm}
  \centering
  \includegraphics[width=0.99\textwidth,height=0.36\textwidth]{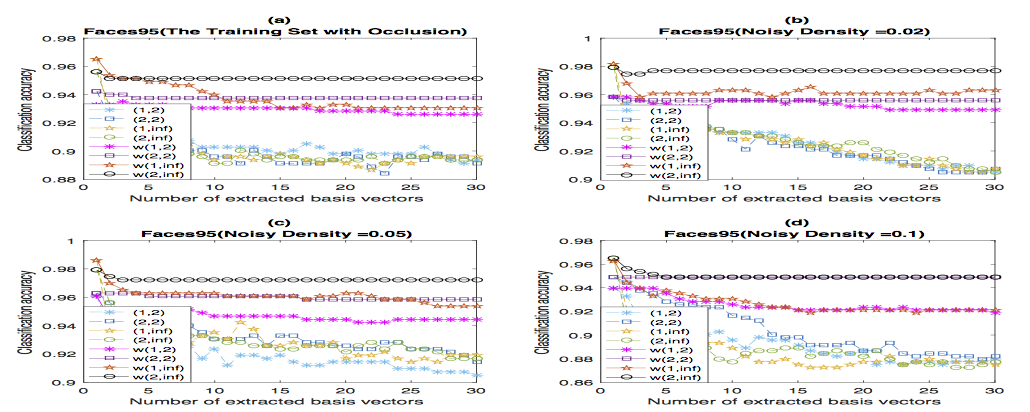}
\caption{Classification accuracies with polluted training set of G2DQPCA and WG2DQPCA on the Faces95 database.
}
\label{acc:faces95_noisytrain}
\end{figure*}

\begin{figure*}[!t]
  \setlength{\abovecaptionskip}{0.1cm}
  \setlength{\belowcaptionskip}{0.1cm}
  \centering
  \includegraphics[width=0.99\textwidth,height=0.36\textwidth]{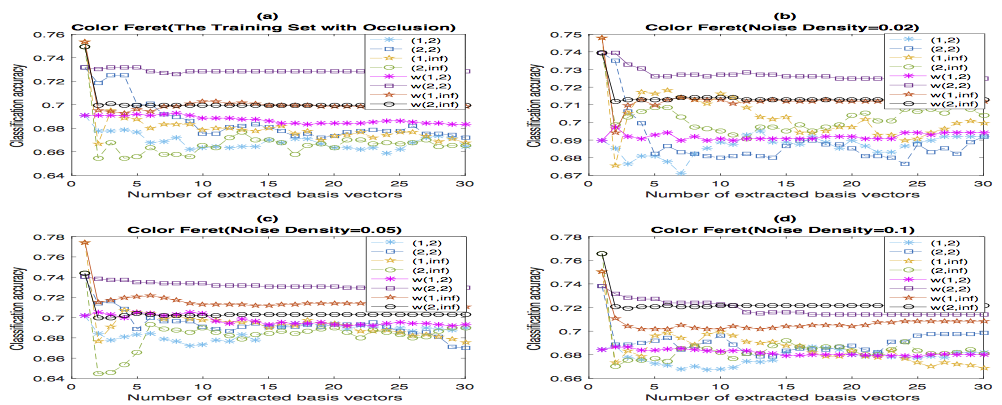}
\caption{Classification accuracies with polluted training set of G2DQPCA and WG2DQPCA on the Color FERET database.
}
\label{acc:feret_noisytrain}
\end{figure*}

The whole procedure of each case is repeated three times and the average result is reported. In Figures $\ref{acc:gtfd_noisytrain}$, $\ref{acc:faces95_noisytrain}$ and $\ref{acc:feret_noisytrain}$, we present the average classification accuracy of compared methods  when the training set is polluted by different kinds and levels of noises:

{\bf(a)}  The training set is polluted  by black and white dots noises.

{\bf(b)-(d)} The training set is polluted by  salt and pepper noises and  the noise densities are 0.02, 0.05 and 0.1, respectively.

\noindent
The numerical results indicate that the pollution on the training set  impacts the classification performance since the feature extraction is perturbed by noise. 

It is still true that the recognition rate of WG2DQPCA is more stable and higher than that of G2DQPCA.  Obviously, under the influence of different noises, the optimal parameter pair $w(s,p)$ is not fixed for all databases.
 In Figure $\ref{acc:gtfd_noisytrain}$,  it is shown that for GTFD database, WG2DQPCA with $s=2$ and $p=2$ has good performance in the case of black and white dots noise; WG2DQPCA with $s=1$ and $p=\infty$ works well when the salt and pepper noise density are 0.02 and 0.05; and WG2DQPCA with $s=1$ and $p=2$ performs better in the last case.
 For Faces95 database,  the numerical results in Figure $\ref{acc:faces95_noisytrain}$ indicate that WG2DQPCA with $s=2$ and $p=\infty$ achieves the highest classification accuracy in all cases.  As for Color FERET database, it is observed that the performance of WG2DQPCA with $s=2$ and $p=2$ is relatively stable from Figure $\ref{acc:feret_noisytrain}$.  It can be concluded that the weighing technique not only promotes the performance of G2DQPCA,  but also helps G2DQPCA  to resist the pollution of noises.

\end{example}

\begin{example}[Color image reconstruction]

\begin{figure*}[!t]
   \setlength{\abovecaptionskip}{0.1cm}
  \setlength{\belowcaptionskip}{0.1cm}
  \centering
  \includegraphics[width=0.99\textwidth,height=0.4\textwidth]{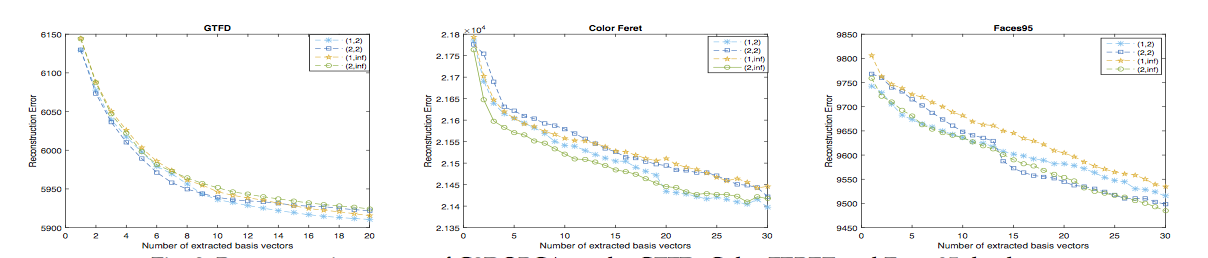}
\caption{Reconstruction errors of G2DQPCA on the GTFD, Color FERET and Faces95 databases.}
\label{fig:err}
\end{figure*}
\begin{figure*}[!t]
  \centering
    \setlength{\abovecaptionskip}{0.1cm}
  \setlength{\belowcaptionskip}{-0.2cm}
  \includegraphics[width=0.6\textwidth,height=0.2\textwidth]{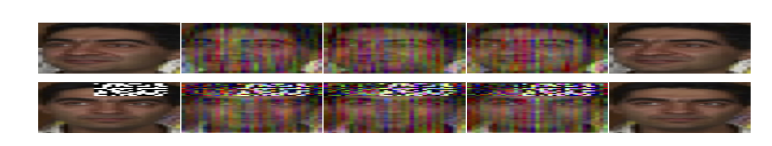}
\caption{The reconstructed images by G2DQPCA on two sample images from the polluted GTFD database.
}
\label{example:recoGTFD}
\end{figure*}
\begin{figure*}[!t]
  \centering
    \setlength{\abovecaptionskip}{0.1cm}
  \setlength{\belowcaptionskip}{-0.2cm}
  \includegraphics[width=0.6\textwidth,height=0.2\textwidth]{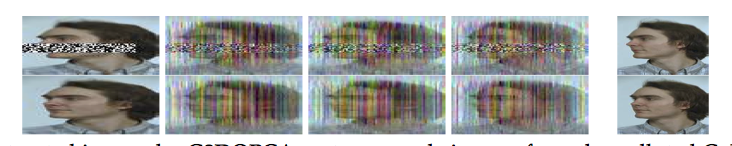}
\caption{The reconstructed images by G2DQPCA on two sample images from the polluted Color FERET database.
}
\label{example:recoferet}
\end{figure*}
\begin{figure*}[!t]
  \centering
    \setlength{\abovecaptionskip}{0.1cm}
  \setlength{\belowcaptionskip}{-0.2cm}
  \includegraphics[width=0.6\textwidth,height=0.2\textwidth]{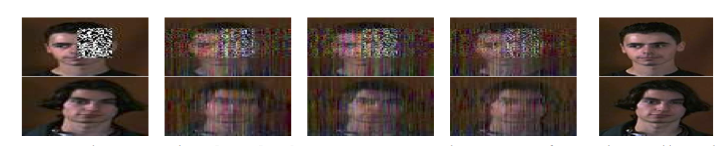}
\caption{The reconstructed images by G2DQPCA on two sample images from the polluted Faces95 database.
}
\label{example:faces95}
\end{figure*}

In this example, we evaluate the reconstruction performance of G2DQPCA algorithm. Here $20\%$ of the three databases have randomly selected to add black and white dots noise, respectively.

The following reconstruction error is used to measure the quality of methods:
\begin{equation}\label{reco-err}
  err=\frac{1}{\ell}\sum\limits_{i=1}^{\ell}\|\Fq_{i}^{\textit{clean}}(\Iq-\Wq\Wq^{*})\|_{F}
\end{equation}
where $\ell$ is the number of clean training data, $\Fq_{i}^{\textit{clean}} $ is the $i$-th clean training sample and $\Wq$ is the projection matrix trained on the whole polluted training database. As emphasized in Section $\ref{s:application}$, $\Wq$ is unweighted  because the process of reconstruction does not need to magnify the effect of features.

In the first graph of Figure $\ref{fig:err}$, we show the reconstruction errors of G2DQPCA in several special cases with the number of features from $1$ to $20$ on GTFD database. In the second and third graphs of Figure $\ref{fig:err}$, we present the reconstruction errors of G2DQPCA with the feature number from $1$ to $30$ on the other two databases.
From the numerical results, all parameter pairs are indicated to have a good performance in image reconstruction.
With the increase of features number, it is observed that applying $L_{1}$-norm on the objective function, i.e., $s=1,p=2$, is effective for the images reconstruction on GTFD database and Color FERET database. G2DQPCA works well on the Faces95 database when $s=2, p=2$ and $s=2, p=\infty$.

As an illustration,  several reconstructed images are listed in Figures $\ref{example:recoGTFD}$, $\ref{example:recoferet}$ and $\ref{example:faces95}$.  The first column are the images to be reconstructed. The following three columns are the reconstructed images by using the first $20$ projection vectors of G2DQPCA wherein the $(s, p)$ pairs are set to be $(1,2)$, $(2,2)$ and $(2,1)$ in order. The last column shows the original images for comparison.

\end{example}

\begin{example}[Comparison with deep learning methods]

\begin{figure*}[!t]
  \setlength{\abovecaptionskip}{0.1cm}
  \setlength{\belowcaptionskip}{0.1cm}
  \centering
  \includegraphics[width=0.99\textwidth,height=0.4\textwidth]{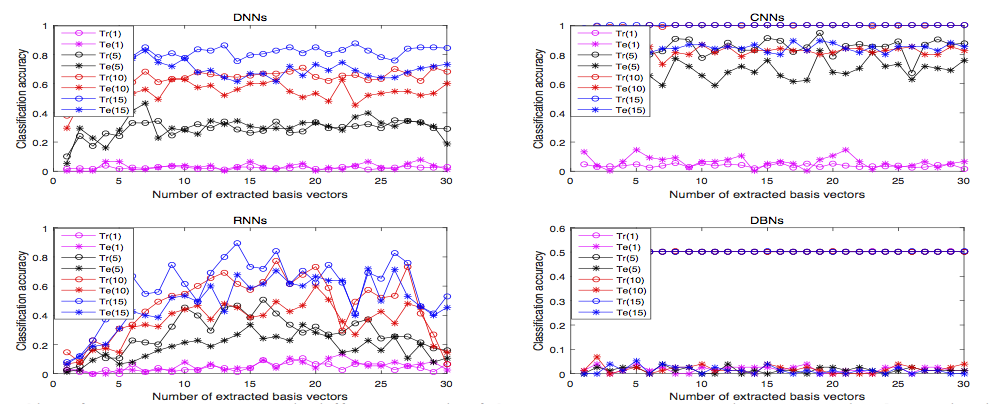}
\caption{Classification accuracies with different epoch of CNNs, DNNs, DBNs and RNNs on the GTFD database.
}
\label{fig:epoch}
\end{figure*}
\begin{figure*}[!t]
  \setlength{\abovecaptionskip}{0.1cm}
  \setlength{\belowcaptionskip}{0.1cm}
  \centering
  \includegraphics[width=0.99\textwidth,height=0.4\textwidth]{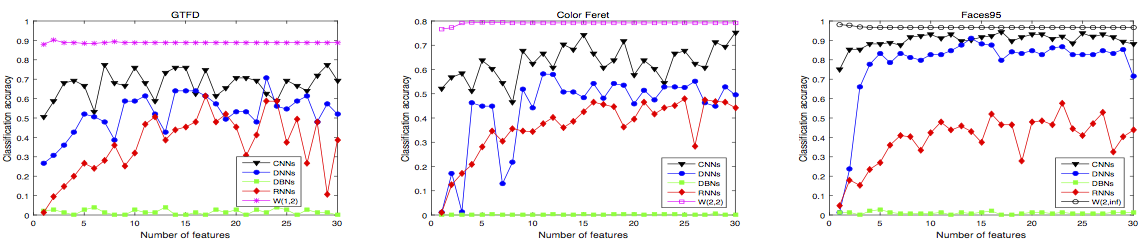}
\caption{Classification accuracies of WG2DQPCA,CNNs,DNNs,DBNs and RNNs on the GTFD, Color FERET and Faces95 databases.}
\label{fig:compare}
\end{figure*}

In this experiment, we compare WG2DQPCA with four  prominent deep learning primitives: Convolutional Neural Networks (CNNs), Deep Neural Networks (DNNs), Deep Belief Networks (DBNs) and Recurrent Neural Networks (RNNs). 
\begin{itemize}
\item DNNs are implemented by stacking layers of neural networks along the depth and width of smaller architectures. For GTFD and Faces95 databases, we increase the number of hidden neurons in first DNN from 8 to $588$, step $20$, and the number of neurons in second and third hidden layers are set as $512$ and $512$. For Color FERET database, we increase the number of hidden neurons in first DNN from $208$ to $788$ with step $20$, and the number of neurons in second and third hidden layers are set as $512$ and $512$.
\item CNNs  are feed-forward, back-propagate neural networks with a special architecture inspired from the visual system, consisting
of alternating layers of convolution layers and sub-sampling layers. In our experiments, we set a convolution layer and a sub-sampling layer. The convolution layer has k feature maps, where we increase $k$ from $8$ to $588$ with step $20$, connected to the single input layer through $k$ $5\times 5$ kernels.
\item DBNs consist of a number of layers of Restricted Boltzmann Machines (RBMs) which are trained in a greedy layer wise fashion. We increase the number of hidden neurons in first RBM from $8$ to $588$ with step $20$ and set a fixed number of hidden neurons of $256$ in second RBM.
\item RNNs  is a kind of recurrent neural networks which takes sequence data as input, recurses in the evolution direction of sequence, and all nodes are connected in chain. We increase the number of hidden neurons from $8$ to $588$ with step $20$ .
\end{itemize}
The comparison is implemented  on color face recognition with using the GTFD, Color FERET, and Faces95 databases. We randomly select 90 percent of data set as the training set and the remaining as the testing set.   

From Example \ref{ex:1}, it has been observed that WG2DQPCA obtains  optimal recognition rates on  the GTFD, Color FERET, and Faces95 databases  when  $w$($s=1$, $p=2$), $w$($s=2$, $p=2$) and $w$($s=2$, $p=\infty$), respectively.  So we fix these optimal values of $s$ and $p$ for   WG2DQPCA during the comparison.

In the test process, we found that if the number of iterations (epoch) is small, the recognition rates of neural networks are low. With increasing iteration times, their recognition rates will increase, but they are easy to overfit. Overfitting means that the model has a small error on the training set but a large error on the test set. Take the GTFD database as an example. The
numerical results are shown in Figure $\ref{fig:epoch}$. 
 The lines $'-o-'$ and $'-\ast-'$ represent the classification accuracies of training and testing samples, respectively. The fuchsin, black, red and blue lines represent the classification accuracies with epoch=$1$, $5$, $10$, $15$. 
   From Figure $\ref{fig:epoch}$, we can see that  the neural networks have high accuracy but have only slight overfitting phenomenon when the epochs of  DNNs, CNNs and RNNs are respectively $10$, $5$ and $10$, respectively. After that, the overfitting  happens, i.e., the  $'-o-'$ line becomes higher than the $'-\ast-'$ line.  The recognition rates of DBNs are always low. So in this experiment, set epoch=$10$ for DNNs, DBNs and RNNs, and epoch=$5$ for CNNs.  It should be noticed that the recognition rates of CNNs, DNNs, DBNs and RNNs can not achieve at high levels with small samples.

 The numerical results are shown in Figure  $\ref{fig:compare}$. 
  The average recognition rates of WG2DQPCA, CNNs, DNNs, DBNs and RNNs are  0.89, 0.67, 0.54, 0.01 and 0.37, respectively; and the maximum recognition rates are  0.9, 0.77, 0.71, 0.03 and 0.59. It indicates that WG2DQPCA performs better than CNNs, DNNs, DBNs and RNNs on color face recognition.

\end{example}

\bigskip
The above performance of G2DQPCA and WG2DQPCA can be understood by the following analysis.
G2DQPCA is in fact to extract a number of principal components (features) of all color images in the training set,   which can be used to reconstruct the optimal  low-rank approximations of original color images. The projections under the exacted features has the largest variance measured by the $L_p$ norm in the objective function. The proposed weighting technique enhances the role of principal components and thus WG2DQPCA performs better than G2DQPCA in color image recognition.  Generally,  the noise influence often exists in the high frequency components, but not in the principal (low frequency) components. That is why the reconstruction has the efficiency of noise reduction.

\section{Conclusion}\label{s:conclusion}
\noindent
In order to extract more types of geometric features of color images, this paper proposes a generalized two-dimensional quaternion principal component analysis method and a color image recognition model based on this method. The quaternion optimization problem in the model is equivalently transformed into a real optimization problem and a fast algorithm is proposed as well as the convergence analysis theory. The proposed principle component weighting strategy helps overcome the odd phenomenon that the recognition rate of the color image recognition model based on the traditional two-dimensional quaternion principal component analysis method decreases with the increase of principal components. Numerical experiments indicate that the proposed method has a higher face recognition rate than the state-of-the-art methods of the same type and four well-known deep learning methods in color image recognition. Even so, our method still has several aspects to be improved. For instance, there is no theory to determine the optimal values of model parameters. A possible solution is to  use the validation set for data-driven parameter selection.


\newpage 
\section{Appendix}\label{s:appendix}
\noindent
To be self-contained, we present some important results in this appendix.

\subsection{Properties of convex and concave functions}
The following theorem explains the property of convex functions [47].
\begin{theorem}\label{cf_pf}
Let $S$ be a non-empty open convex set in $\mathbb{R}^{n}$, and let $f$ be a function with a first-order continuous derivative function, i.e.,$f\in C^1(S)$. Then $f$ is a convex function if and only if for all $w$, $v\in S$,
\begin{equation}\label{first_order}
f(w)\geq f(v)+\nabla f(v)^T(w-v).
\end{equation}
\end{theorem}
\begin{proof}
Let $f$ be a convex function. In other words, for all $w$, $v\in S$, $\lambda\in (0,1)$,
\begin{equation}\label{small}
\begin{aligned}
f(\lambda w+(1-\lambda)v)&\leq\lambda f(w)+(1-\lambda)f(v).
\end{aligned}
\end{equation}
Meanwhile,
\begin{equation*}
\begin{aligned}
f(\lambda w+(1-\lambda)v)&=f(v+\lambda(w-v))\\
                                               &=f(v)+\lambda \nabla f(v)^T(w-v)\\
                                               &+O(\lambda\|w-v\|),
\end{aligned}
\end{equation*}
where $O(\lambda\|w-v\|)\longrightarrow0$ when $\|w-v\|\longrightarrow0.$

Dividing by $\lambda$ on both sides, we get $$f(w)\geq f(v)+\nabla f(v)^T(w-v).$$

Conversely, since $S$ is convex, then for all $w$, $v\in S$, $\lambda\in (0,1)$, let $z=\lambda w+(1-\lambda)v\in S$.  By the condition \eqref{first_order}, we have that
\begin{subequations}
\begin{align}
f(w)&\geq f(z)+\nabla f(z)^T(w-z),\\
f(v)&\geq f(z)+\nabla f(z)^T(v-z).
\end{align}
\end{subequations}
Thus, combining the above equations, we get
\begin{equation*}
\begin{aligned}
\lambda f(w)&+(1-\lambda)f(v)\geq\lambda f(z)+\lambda\nabla f(z)^T(w-z)\\
&+(1-\lambda) f(z)+(1-\lambda)\nabla f(z)^T(v-z) \\
&=f(z)+\nabla f(z)^T(\lambda w+(1-\lambda)v-z)\\
&=f(\lambda w+(1-\lambda)v).
\end{aligned}
\end{equation*}

Therefore, $f$ is a convex function.
\end{proof}

Next, we consider concave functions.
\begin{theorem}\label{concave_pf}
Let $S$ be a non-empty open convex set in $\mathbb{R}^{n}$, $f\in C^1(S)$. Then $f$ is a concave function if and only if for all $w$, $v\in S$,
\begin{equation}\label{concave-function}
f(w)\leq f(v)+\nabla f(v)^T(w-v).
\end{equation}
\end{theorem}

\begin{proof}
The proof of Theorem \ref{concave_pf}  is similar to that of Theorem \ref{cf_pf}. The only difference is that we need to replace the "$\geq$" with "$\leq$" in \eqref{small} as well as in other formulas.
\end{proof}

\subsection{Proof of  equation $(3.5)$}
By the definition, we can get 
\begin{equation*}
\begin{aligned}
  |\vq^{(\gamma)}|_Q^{p-1}&=(|\vq|^{p-1}+|\vq|^{p-1}\iq+|\vq|^{p-1}\jq+|\vq|^{p-1}\kq)^{(\gamma)}\\
          &= \left[
           \setlength{\arraycolsep}{0.15em}
            \begin{array}{llll}
              (|\vq|^{p-1})^T&(|\vq|^{p-1})^T&(|\vq|^{p-1})^T&(|\vq|^{p-1})^T
            \end{array}
            \setlength{\arraycolsep}{2pt}
          \right]^T,
\end{aligned}
\end{equation*}
and
\begin{equation*}
\begin{aligned}
  {\rm signQ}(\vq^{(\gamma)})&=({\rm sign}(\vq))^{(\gamma)}\\
          &= \left[
           \setlength{\arraycolsep}{0.2em}
            \begin{array}{llll}
              [\frac{v_i^{(0)}}{|\vq_i|}]_n^T&[\frac{v_i^{(1)}}{|\vq_i|}]_n^T&[\frac{v_i^{(1)}}{|\vq_i|}]_n^T&[\frac{v_i^{(3)}}{|\vq_i|}]_n^T
            \end{array}
            \setlength{\arraycolsep}{5pt}
          \right]^T.
\end{aligned}
\end{equation*}
Thus, we have 
\begin{equation}\label{abs_sign}
\begin{aligned}
 &|\vq^{(\gamma)}|_Q^{p-1}\circ {\rm signQ}(\vq^{(\gamma)})\\
=&\left[
           \setlength{\arraycolsep}{0.15em}
            \begin{array}{llll}
              [|\vq_i|^{p-2}v_i^{(0)}]_n^T & [|\vq_i|^{p-2}v_i^{(1)}]_n^T  &[|\vq_i|^{p-2}v_i^{(2)}]_n^T  &[|\vq_i|^{p-2}v_i^{(3)}]_n^T
            \end{array}
            \setlength{\arraycolsep}{3.5pt}
          \right]^T.
\end{aligned}
\end{equation}
Above all, it  naturally  derives that
\begin{equation*}
\begin{aligned}
 &\quad(|\vq^{(\gamma)}|_Q^{p-1}\circ {\rm signQ}(\vq^{(\gamma)}))^T\vq^{(\gamma)}\\
&=\sum_{i=1}^{n}|\vq_i|^{p-2}[(v_i^{(0)})^2+(v_i^{(1)})^2+(v_i^{(2)})^2+(v_i^{(3)})^2]\\
&=\sum_{i=1}^{n}|\vq_i|^{p-2}|\vq_i|^2=\sum_{i=1}^{n}|\vq_i|^{p}=\|\vq^{(\gamma)}\|_{2,p}^p.
\end{aligned}
\end{equation*}
\subsection{Proof of equation $(3.22)$}
From equation $(3.21)$, it's not difficult to understand that 
\begin{equation*}
(\uq^{(\gamma)})^{(k)}=|(\vq^{(\gamma)})^{(k)}|_Q^{q-1}\circ {\rm signQ}((\vq^{(\gamma)})^{(k)}).
\end{equation*}
Then, we give an explanation
\begin{equation*}
\|(\vq^{(\gamma)})^{(k)}\|_{2,q}^{q-1}=\|(\uq^{(\gamma)})^{(k)}\|_{2,p}.
\end{equation*}
By the equation \eqref{abs_sign}, 
\begin{equation*}
\begin{aligned}
\|\uq^{(\gamma)}\|_{2,p}&=\||\vq^{(\gamma)}|_Q^{q-1}\circ {\rm signQ}(\vq^{(\gamma)})\|_{2,p}\\
&=\||\vq^{(\gamma)}|^{q-1}\|_{2,p}.
\end{aligned}
\end{equation*}
Recalling the definition of $L_{2,p'}$-norm, we obtain
\begin{equation*}
\|(\uq^{(\gamma)})^{(k)}\|_{2,p}=\left[\sum_{i=1}^n(|(v_i^{(\gamma)})^{(k)}|^{q-1})^{p}\right]^{\frac{1}{p}}.
\end{equation*}
Let $\frac{1}{p}+\frac{1}{q}=1$, then
\begin{equation*}
\begin{aligned}
\|(\uq^{(\gamma)})^{(k)}\|_{2,p}&=\left[\sum_{i=1}^n|(v_i^{(\gamma)})^{(k)}|^q\right]^{\frac{q-1}{q}}\\
&=\left[\left(\sum_{i=1}^n|(v_i^{(\gamma)})^{(k)}|^q\right)^{\frac{1}{q}}\right]^{q-1}\\
&=\|(\vq^{(\gamma)})^{(k)}\|_{2,q}^{q-1}.
\end{aligned}
\end{equation*}

\subsection{Proof of equation $(3.29)$}
Now we consider the partial derivative with respect to $\wq^{(\gamma)}$:
\begin{equation*}
\left[\frac{\partial h(\wq^{(\gamma)})}{\partial \wq^{(\gamma)}}\right]_l=\frac{\partial h(\wq^{(\gamma)})}{\partial w_i^{(j)}}.
\end{equation*}
Considering the partial derivative
\begin{equation*}
\begin{aligned}
\left[\frac{\partial |\wq|}{\partial w_i^{(j)}}\right]_l&=\frac{\partial}{\partial w_i^{(j)}}|\wq_i|=\frac{w_i^{(j)}}{|\wq_i|}.
\end{aligned}
\end{equation*}
Then
\begin{equation*}
\left[\frac{\partial h(\wq^{(\gamma)})}{\partial \wq^{(\gamma)}}\right]_l=s(\vq^{(k)})_i^{(j)}-\lambda p\left[|\wq_i^{(k)}|^{p-1}\frac{w_i^{(j)}}{|\wq_i|}\right].
\end{equation*}

\end{document}